\documentclass[twoside]{article}

\usepackage[accepted]{aistats2026}

\usepackage{hyperref}
\usepackage{url}
\usepackage[T1]{fontenc}

\usepackage{amsmath}
\usepackage{amsthm}
\usepackage{graphicx}      %

\usepackage{subcaption}
\usepackage{algorithm}
\usepackage[noend]{algorithmic}

\usepackage{wrapfig}
\usepackage{transparent}
\usepackage{booktabs}
\usepackage{multirow}
\usepackage{makecell}
\usepackage{xspace}
\usepackage{xcolor}
\usepackage[framemethod=tikz]{mdframed}
\usepackage{enumitem}
\usepackage{mathtools}
\usepackage{todonotes}
\usepackage{tabularx}
\usepackage{listings}
\usepackage{array}

\usepackage{amsmath,amsfonts}

\def\1{\bm{1}}

\DeclareMathAlphabet{\mathsfit}{\encodingdefault}{\sfdefault}{m}{sl}
\SetMathAlphabet{\mathsfit}{bold}{\encodingdefault}{\sfdefault}{bx}{n}

\newcommand{\E}{\mathbb{E}}

\DeclareMathOperator{\TV}{\mathrm{TV}}

\newcommand{\PP}{\mathbb{P}}
\newcommand{\mint}[1]{\min\left\{1, \frac{p(x_{#1} \mid x_{0:#1-1})}{q(x_{#1} \mid x_{0:#1-1})}\right\}}

\usepackage{tabularx}
\usepackage{makecell}

\usepackage[framemethod=tikz]{mdframed}
\newmdenv[
  backgroundcolor=gray!10,  %
  linecolor=black,          %
  linewidth=1pt,            %
  roundcorner=5pt,          %
  skipabove=10pt,
  skipbelow=10pt,
  innerleftmargin=10pt,
  innerrightmargin=10pt,
  innertopmargin=10pt,
  innerbottommargin=10pt
]{mybox}

\usepackage[many]{tcolorbox}

\newcommand{\SampleBoxText}[3][]{%
  \begin{tcolorbox}[
    enhanced, breakable,
    colframe=black!75!white, colback=gray!10!white,
    title={#2}
  ]
   
#3

  \end{tcolorbox}%
}

\newcommand{\codecell}[1]{%
  \begingroup
  \ttfamily\obeyspaces\obeylines
  \detokenize{#1}%
  \par
  \endgroup
}

\newtheorem{theorem}{Theorem}
\newtheorem{lemma}{Lemma}
\newtheorem{proposition}{Proposition}
\newtheorem{definition}{Definition}

\newcommand{\nj}[1]{{#1}}

\newcommand{\method}{\textsc{Diversed}\xspace}
\definecolor{ForestGreen}{HTML}{228B22}

\usepackage[round]{natbib}

\begin{document}

\runningtitle{DIVERSED: Relaxed Speculative Decoding  via Dynamic Ensemble Verification}

\runningauthor{Wang, Kasa, MS, Kasa, Zou, Negi, Zhang, Jiang, Song}

\twocolumn[

\aistatstitle{DIVERSED: Relaxed Speculative Decoding via \\Dynamic Ensemble Verification}

\aistatsauthor{Ziyi Wang$^{*,1}$, Siva Rajesh Kasa$^{*,2}$, Ankith M S$^2$, Santhosh Kumar Kasa$^2$, Jiaru Zou$^3$,\\
  \textbf{Sumit Negi$^2$, Ruqi Zhang$^1$, Nan Jiang$^4$, Qifan Song$^1$}}

\aistatsaddress{$^1$Purdue University, $^2$Amazon Inc., $^3$University of Illinois Urbana-Champaign, $^4$University of Texas at El Paso}
]
\def\thefootnote{*}\footnotetext{Equal contribution.}\def\thefootnote{\arabic{footnote}}

\begin{abstract}
Speculative decoding is an effective technique for accelerating large language model inference by drafting multiple tokens in parallel. In practice, its speedup is often bottlenecked by a rigid verification step that strictly enforces the accepted token distribution to exactly match the target model. This constraint leads to the rejection of many plausible tokens, lowering the acceptance rate and limiting overall time speedup.
To overcome this limitation, we propose \underline{D}ynam\underline{I}c \underline{VE}rification \underline{RE}laxed \underline{S}p\underline{E}culative \underline{D}ecoding (DIVERSED), a relaxed verification framework that improves time efficiency while preserving generation quality. 
DIVERSED: learns an ensemble-based verifier that blends the draft and target model distributions with a task-dependent and context-dependent weight. 
We provide theoretical justification for our approach and demonstrate empirically that DIVERSED achieves substantially higher inference efficiency compared to standard speculative decoding methods. Code is available at: \url{https://github.com/comeusr/diversed}.

\end{abstract}

\section{INTRODUCTION}
Recent advances in large language models have demonstrated strong capabilities on demanding tasks, including mathematical reasoning \citep{kojima2022large, wei2022chain, zou2025latentmas}, complex programming~\citep{roziere2023code, yan2024decoding, jiang2024survey}, and tool use \citep{schick2023toolformer, qin2023toolllm, zou2025tattoo}. 
Yet, these gains are achieved with billion-scale models, which substantially increase inference cost and make efficiency a central bottleneck. 
Existing efforts have improved runtime via faster attention~\citep{dao2022flashattention}, model compression~\citep{dettmers2022gpt3}, and accelerating decoding~\citep{leviathan2023fast, narasimhan2024faster}.

\begin{figure}
    \centering
    \includegraphics[width=1\linewidth]{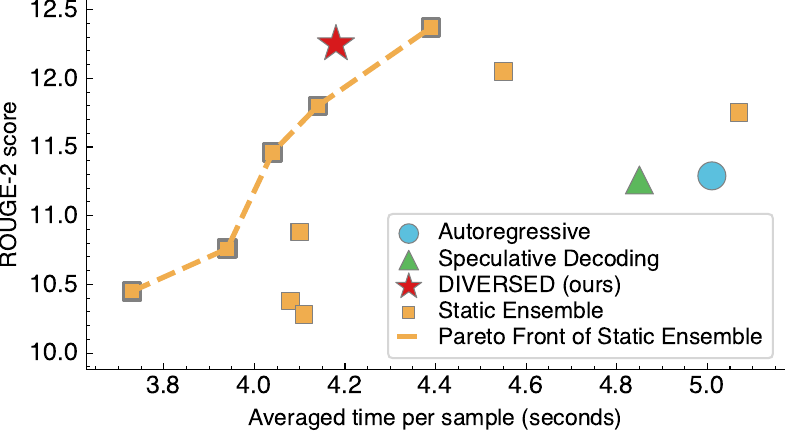} 
    \caption{Our method improves upon the Pareto frontier achieved by static ensemble, attaining better trade-offs between inference time and generation quality.
	Results are reported on the CNNDM dataset using the target/draft model pair: Llama-3.1-8B-Instruct and Llama-3.2-1B-Instruct, with temperature being $0$.  More results are in Appendix Figure~\ref{fig:rouge_time_qwen_gemma}.}
	\label{fig:llama_cnndm_comp}
\end{figure}

\begin{figure*}[!t]
  \centering
  \includegraphics[width=.97\textwidth]{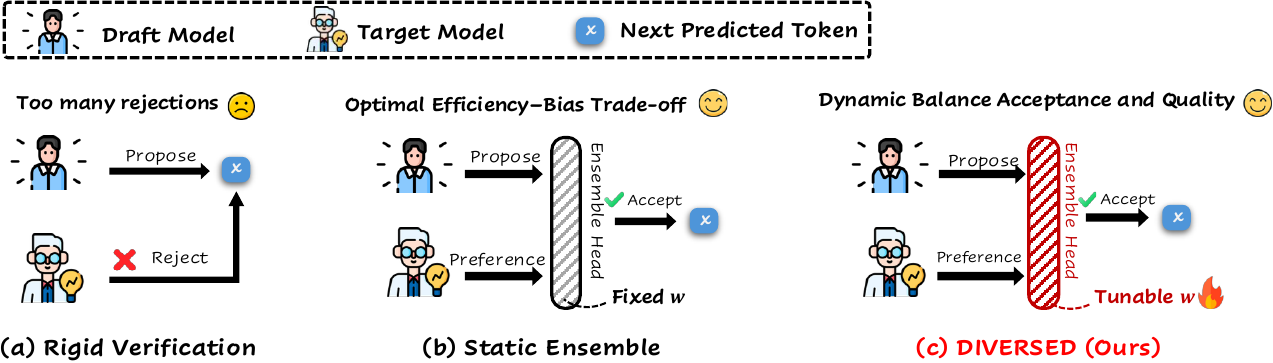}
  \caption{
Compared with classic speculative decoding (shown in (a) rigid verification), our \method (c) achieves a higher acceptance rate, comparable accuracy, and lower wall-clock time.
Compared with static ensemble (b), \method (c) attains higher accuracy via a tunable ensemble weight that adapts to the task and the context.
}
  \label{fig:diversed_schematic}
\end{figure*}

Among decoding strategies, speculative decoding has emerged as a particularly effective approach~\citep{leviathan2023fast, chen2023accelerating}. It employs a small draft model to propose tokens and a large target model to verify them in parallel, reducing latency while maintaining target-level quality.
Subsequent research has progressed along two main directions: (1) lossless methods that preserve exact agreement with the target model, and (2) lossy methods that accept more draft tokens at the cost of potential quality degradation. 
Lossless work includes self-speculative architectures that restructure or internalize the draft process~\citep{Cai2024Medusa, li2024eagle, Elhoushi2024LayerSkip}, and alignment strategies that train draft models to produce more target-like outputs, thereby increasing acceptance~\citep{zhou2023distillspec, goel2024direct, liu2023online}. On the other hand, lossy approaches intentionally relax token-level agreement with the target, trading performance degradation for higher acceptance and throughput~\citep{bachmann2025judge}. Recent theory further formalizes this acceptance rate and generation quality tradeoff~\citep{yin2024theoretical}.

\nj{This work follows the lossy speculative decoding line, aiming to increase acceptance while incurring minimal degradation in generation quality. This is feasible because many tokens rejected by the target verifier can still lead to correct solutions~\citep{yan2024decoding, bachmann2025judge}; rejecting them outright can limit overall time speedup. 
An intuitive idea is to relax the verification rule by interpolating the draft and target distributions using a static weight, noted as static ensemble. It directly traces the Pareto frontier characterized by~\citet{yin2024theoretical}.
Figure~\ref{fig:example} illustrates both a success case and a failure case. While a static ensemble can improve acceptance without harming quality in some instances, it can also degrade generation quality and yield incorrect solutions. This suggests that the ensemble rule should be context-dependent, rather than fixed across all generated contexts.}

To address this limitation, we propose \underline{D}ynam\underline{I}c \underline{VE}rification \underline{RE}laxed \underline{S}p\underline{E}culative \underline{D}ecoding (\method), a dynamic ensemble verifier that mixes the target and draft distributions with context- and token-dependent weights.
These weights are learned by optimizing a regularized objective that maximizes expected task reward on the dataset while encouraging a high acceptance rate.
As shown in Figure~\ref{fig:llama_cnndm_comp}, \method achieves higher acceptance at comparable quality, surpassing the Pareto frontier of static ensemble. A visual comparison of \method against existing baselines is shown in Figure~\ref{fig:diversed_schematic}.

Theoretically, we show that (1) the \emph{static ensemble} verifier lies on the acceptance-quality Pareto front proposed by~\citet{yin2024theoretical}.
Importantly, this implies that practitioners need not maintain a fleet of draft--target pairs for different user requirements: a \emph{single pair of models} can flexibly serve diverse latency--quality trade-offs simply by adjusting the static ensemble weight.
(2) We also derive an exact, step-dependent expression for the expected accepted length. This characterization captures position effects overlooked in prior analyses and eliminates both the \textit{i.i.d.} assumption adopted in earlier work. This extension is technically non-trivial.

Experimentally, we show (1) \method consistently increases acceptance rates, while preserving task accuracy across all model pairs and task datasets.
 (2) The wall-clock speedup closely correlated to acceptance rate: increases in acceptance rate reliably translate into end-to-end latency reductions. (3) The efficiency–quality trade-off exhibits a clear Pareto front, on which \method consistently achieves a strong balance between speed and accuracy.

\section{PRELIMINARY} \label{sec:prelim}
\textbf{Speculative decoding.} In autoregressive language models, a target model generates tokens $(x_1,\dots,x_T)$  from a prompt \(x_0\), by sampling tokens one at a time from its distribution \(p(\cdot| x_{0:t-1}) \), 
Speculative decoding accelerates this process by introducing a smaller, faster draft model \(q(\cdot| x_{0:t-1})\) that proposes a block of \(N\) draft tokens \(\tilde{x}_{t:t+N-1}\).  For convenience, we write $p_t(\cdot) = p(\cdot | x_{0:t-1})$ and $q_t(\cdot) = q(\cdot | x_{0:t-1})$, when the prefix $x_{0:t-1}$ is clear from the context.
The target model then verifies in parallel by computing \(p(\tilde{x}_{t:t+N-1}| x_{0:t-1})\) and accepts \(\tilde{x}_t\) with probability
\begin{align*}
b_t(\tilde{x}_t) \;=\; \min\!\left\{1,\;{p_t(\tilde{x}_t)}/{q_t(\tilde{x}_t)}\right\}.
\end{align*}
If accepted, we set $x_t $ as $\tilde{x}_t$. Otherwise, the target samples a replacement token \(x_t\) from the fallback distribution 
\begin{equation*}
P_t(x)=\frac{[p_t(x) - q_t(x)]_+}{\sum_z [p_t(z) - q_t(z)]_+},
\end{equation*}
where $[x]_+ = \max\{0, x\}$.
Under standard conditions, the resulting stream of tokens is exactly distributed according to the target model~\citep{leviathan2023fast}.

Let $T_N$ denote the number of accepted tokens in a block of $N$ draft tokens.
Prior work~\citep{leviathan2023fast} shows that $\E[T_N] = ({1-\beta^{N+1}})/({1-\beta})$, where $\beta$ is the expected accepted probability and equals to \(1 - \TV(p,q) \). These guarantees, however, rely on the assumption that token verification is independent and identical (\textit{i.i.d.}), which have not considered the effect of the draft token positions in the draft sequence.

\textbf{Lossy speculative decoding and its Pareto front.}
A \textit{lossy} variant of speculative decoding replaces the standard rule with a relaxed accept rule \(b_t(\tilde{x}_t)\). Upon rejection, it samples from a fallback \(P_t(\cdot)\), inducing a relaxed token distribution \(\mathbb{P}(\cdot|{b_t},{P_t})\). The detailed procedure is given in Algorithm \ref{alg:lossy-sd}.
To increase acceptance relative to lossless speculative decoding, the accept rule should be more permissive than the standard rule.
The design objective of lossy speculative decoding is a multi‑objective problem that simultaneously minimizes (1)  the rejection probability \(\mathbb{P}_{b_t}(\mathrm{reject})\) and (2) the total variation bias \(\TV(\mathbb{P}(\cdot;{b_t},{P_t}),\, p)\). We formalize the trade‑off between acceptance and distribution bias via Pareto optimality. 

\begin{definition}[Pareto-front solutions for lossy speculative decoding]
We say that an accept--fallback pair $(b_t,P_t)$ \emph{dominates} $(b'_t,P'_t)$ if  $\mathbb{P}_{b_t}(\mathrm{reject}) < \mathbb{P}_{b'_t}(\mathrm{reject})$  and $\TV\!\left(\mathbb{P}(\cdot|{b_t},{P_t}),\, p\right) < \TV\left(\mathbb{P}(\cdot;{b'_t},{P'_t}),\, p\right).$    
A pair $(b_t,P_t)$ is \emph{Pareto-optimal} (non-dominated) if no other pair dominates it. 
The set of all Pareto-optimal pairs forms the \emph{Pareto front}.
\end{definition}

\begin{theorem}[\cite{yin2024theoretical}, Theorem 5; informal]\label{theorem:thm5}
An accept--fallback pair $(b_t,P_t)$ is Pareto-optimal if and only if it satisfies
\begin{equation}\label{eq:fornt}
\mathbb{P}_{b_t}(\mathrm{reject}) + \TV\!\left(\mathbb{P}(\cdot;{b_t},{P_t}),\, p\right) \;=\; \TV(q,\, p).
\end{equation}
\end{theorem}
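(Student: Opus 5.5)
The plan has three steps: write the output law explicitly, prove a lower bound that every accept--fallback pair must satisfy, and then show the Pareto front is exactly where this bound is tight. The argument is pointwise on a single step $t$, so I drop the subscript $t$.

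\textbf{Step 1: explicit output law.} For an accept rule $b(x)\in[0,1]$ and fallback $P$, write $a(x)=q(x)b(x)$ for the accepted mass and $r=\mathbb{P}_b(\mathrm{reject})=1-\sum_x a(x)$. The output distribution is then $\mathbb{P}(x)=a(x)+rP(x)$.

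\textbf{Step 2: the lower bound $r+\TV(\mathbb{P},p)\ge \TV(q,p)$.} Since $a\le q$ and $\mathbb{P}\ge a$, we have
\[
\sum_x\min\{\mathbb{P}(x),p(x)\}\ \ge\ \sum_x \min\{a(x),p(x)\}\ \ge\ \sum_x\min\{q(x),p(x)\}-\sum_x\big(q(x)-a(x)\big).
\]
The second inequality uses $\min\{a,p\}\ge\min\{q,p\}-(q-a)$. Using $\TV(\mu,\nu)=1-\sum_x\min\{\mu(x),\nu(x)\}$, this gives $1-\TV(\mathbb{P},p)\ge 1-\TV(q,p)-r$, which is the claimed bound.

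\textbf{Step 3: the bound is attained along a whole curve.} For $r\in[0,\TV(q,p)]$, I would exhibit a pair with $\TV(\mathbb{P},p)=\TV(q,p)-r$. The construction starts from the standard lossless pair, where $a=\min\{q,p\}$, $r=\TV(q,p)$, and $\mathbb{P}=p$. Then I would shift a mass of $\TV(q,p)-r$ of acceptance into the surplus region $\{q>p\}$, while removing the same amount of fallback mass from the deficit region $\{p>q\}$. A convex interpolation of the form $a=\min\{q,(1-\lambda)p+\lambda q\}$, with the fallback taken as the normalized positive part of $p-a$, should realize every point on the line. The bookkeeping to check is that $\mathbb{P}$ exceeds $p$ exactly by the extra accepted surplus, so $\TV(\mathbb{P},p)$ equals that surplus, which equals $\TV(q,p)-r$.

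\textbf{Step 4: equivalence with Pareto optimality.} Every achievable point $(r,\TV(\mathbb{P},p))$ lies on or above the line $r+\TV=\TV(q,p)$, and every point of that line segment is achievable. From this:
\begin{itemize}
\item If a pair is strictly above the line, the point on the segment obtained by lowering both coordinates strictly dominates it, so it is not Pareto-optimal. One must check that such a point stays within the segment; this works because the segment spans the whole range $r\in[0,\TV(q,p)]$ and $\TV(\mathbb{P},p)\in[0,\TV(q,p)]$.
\item If a pair is on the line, then strictly decreasing both coordinates would violate the lower bound from Step 2, so the pair is Pareto-optimal.
\end{itemize}

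\textbf{Main obstacle.} I expect the hardest part to be the bookkeeping in Step 3, since it requires handling the supports $\{q>p\}$ and $\{p>q\}$ separately. A second delicate point is the edge case of a point strictly above the line with $r$ large: domination must then be constructed by reducing $\TV(\mathbb{P},p)$ while keeping $r$ within the feasible range, and it needs to be checked that the segment provides such a point.
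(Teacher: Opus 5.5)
Your Steps 1--3 and the ``if'' half of Step 4 are sound. The bound $r+\TV(\mathbb{P},p)\ge\TV(q,p)$ follows from $\min\{a,p\}\ge\min\{q,p\}-(q-a)$. Your interpolation is the static ensemble with $w=1-\lambda$ (the normalized $[p-a]_+$ equals $[p-q]_+/\TV(q,p)$), so Step 3 is exactly the computation in the paper's proof of Proposition~\ref{prop:optimal}. The paper does not prove this theorem itself; it imports it from \citet{yin2024theoretical}.

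The gap is in the ``only if'' half. Under the paper's definition, dominating a pair requires another pair with \emph{strictly} smaller bias, which is impossible when $\TV(\mathbb{P},p)=0$. Such pairs off the line exist. Taking $b\equiv 0$ and $P=p$ gives $\mathbb{P}=p$ and $r=1$. More generally, $b=c\min\{1,p/q\}$ with $c<1$ and $P=(p-c\min\{q,p\})/r$ gives $\mathbb{P}=p$ with $r=1-c\,(1-\TV(q,p))>\TV(q,p)$ whenever $\TV(q,p)<1$. These pairs are non-dominated in the strict-in-both-coordinates sense, yet $r+\TV(\mathbb{P},p)>\TV(q,p)$.

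So your claim that the segment always supplies a dominating point fails exactly here. This is the real content of the ``$r$ large'' case you flagged: for $r>\TV(q,p)$ the only usable segment point is the lossless pair $(\TV(q,p),0)$, which strictly beats the bias only if $\TV(\mathbb{P},p)>0$. In fact the ``only if'' direction is false as literally stated, so no argument can close it without changing the hypotheses.

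Two repairs work with your argument essentially unchanged.

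\emph{Weak dominance.} Use $\le$ in both coordinates, strict in at least one. The ``if'' half still follows, because $r'+\TV(\mathbb{P}',p)<r+\TV(\mathbb{P},p)=\TV(q,p)$ contradicts Step 2. A pair above the line is dominated by the segment point with $r^\ast=\min\{r,\TV(q,p)\}$. If $r\le\TV(q,p)$, that point has the same rejection and bias $\TV(q,p)-r<\TV(\mathbb{P},p)$. If $r>\TV(q,p)$, it is the lossless pair, with strictly smaller rejection and bias $0$.

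\emph{Restricted accept rules.} Keep strict dominance, but restrict to accept rules at least as permissive as the standard one, $b\ge\min\{1,p/q\}$, as the preliminaries suggest. Then $r\le\TV(q,p)$, and your Step 3 pairs stay in the class. Any pair above the line has $r>0$, since $r=0$ forces $\mathbb{P}=q$, which is on the line. It also has $\TV(\mathbb{P},p)>\TV(q,p)-r\ge 0$. Hence $(\TV(q,p)-\TV(\mathbb{P},p),\,r)\cap[0,\TV(q,p)]$ is nonempty, and your interval argument goes through.
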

Theorem~\ref{theorem:thm5} shows that reducing the rejection probability---and thus improving time efficiency---necessarily comes at the cost of increased distribution bias. Here, \emph{distribution bias} is the discrepancy between the token distribution induced by relaxed speculative decoding and the target-model distribution, measured (following prior work~\citep{yin2024theoretical}) by the total variation distance.

\section{METHODOLOGY} \label{sec:method}

\subsection{Static Ensemble}\label{sec:static-ensemble}

\textbf{Motivating example.} Figure~\ref{fig:acceptance_total_time} (a) studies the relation between inference time and acceptance rate. The results are obtained by generating 128 tokens on the CNNDM dataset, using three target/draft pairs from the Llama, Qwen, and Gemma families.
Across all pairs, we observe that wall-clock time decreases monotonically as acceptance rate increases. This highlights acceptance as the primary driver of speedup.

Motivated by this observation,  a straightforward idea to improve the acceptance rate is \emph{static ensemble}, which leverages a fixed ensemble of the target distribution $p_t(\cdot)$ and the draft distribution $q_t(\cdot)$ as the verification distribution. 
Given a constant weight $0 < w \leq 1$, define the ensemble distribution $\nu_t$ as:
\begin{equation} \label{eq:nu_t}
    \nu_t(\cdot) = w p_t(\cdot) + (1-w)q_t(\cdot).
\end{equation}
During verification, a draft token $\tilde{x}_t$ is accepted with probability 
$b_{\nu_t}(\tilde{x}_t)=\min\{1, \frac{\nu_t(\tilde{x}_t)}{q_t(\tilde{x}_t)}\}$.
If the token is rejected, we resample $x_t$ from the {fallback distribution}:
\[
P_{\nu_t}(x) = \frac{[\nu_t(x)-q_t(x)]_+}{\sum_{x'} [\nu_t(x')-q_t(x')]_+}.
\]
Following the analysis of~\cite{chen2023accelerating, leviathan2023fast}, we can show that this verification and resampling mechanism ensures that the resulting token distribution $x_t$ matches the intended ensemble distribution $\PP(x_t=x\mid x_{0:t-1})=\nu_t(x)$.

\begin{proposition}\label{prop:optimal}
The static ensemble achieves the Pareto front, i.e., $(\nu_t, P_{\nu_t})$ satisfies: 
\begin{equation*}
\PP_{\nu_t}(\mathrm{reject}) + \TV(\nu_t, p) = \TV(q, p).
\end{equation*}
Proof is deferred to Appendix~\ref{appex:proof_optimal}.
\end{proposition}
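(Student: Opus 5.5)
We prove the identity by computing both terms on the left-hand side explicitly, without invoking Theorem~\ref{theorem:thm5}; that theorem only tells us that such an identity is what Pareto-optimality means. Throughout we fix the prefix and drop the subscript $t$. We write $p,q$ for the target and draft distributions and $\nu = wp+(1-w)q$ with $0<w\le 1$.

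\textbf{Step 1: the rejection probability.} Since the draft token is sampled from $q$, the rejection probability is
\begin{equation*}
\PP_{\nu}(\mathrm{reject}) = \sum_x q(x)\Bigl(1-\min\Bigl\{1,\tfrac{\nu(x)}{q(x)}\Bigr\}\Bigr) = \sum_x [q(x)-\nu(x)]_+ .
\end{equation*}
Because both $q$ and $\nu$ sum to one, this equals $\TV(q,\nu)$. Next we use linearity: $q-\nu = w(q-p)$. This gives $[q-\nu]_+ = w[q-p]_+$ pointwise, since $w>0$. Hence
\begin{equation*}
\PP_{\nu}(\mathrm{reject}) = w\sum_x [q(x)-p(x)]_+ = w\,\TV(q,p).
\end{equation*}

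\textbf{Step 2: the distribution bias.} As noted before the proposition, the standard argument of \citet{leviathan2023fast} shows that the verification-plus-fallback scheme outputs tokens distributed exactly as $\nu$. We should check that the fallback is well defined. If $\sum_x[\nu(x)-q(x)]_+=0$, then $\nu=q$, rejection never occurs, and the fallback is irrelevant. So the bias term is $\TV(\nu,p)$. Again by linearity, $\nu-p = (1-w)(q-p)$, and therefore
\begin{equation*}
\TV(\nu,p) = \tfrac12\sum_x |\nu(x)-p(x)| = (1-w)\,\TV(q,p).
\end{equation*}

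\textbf{Step 3: combine.} Adding the two results gives $w\,\TV(q,p)+(1-w)\,\TV(q,p)=\TV(q,p)$, which is exactly the claimed identity. By Theorem~\ref{theorem:thm5}, the pair $(\nu_t,P_{\nu_t})$ is therefore Pareto-optimal.

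\textbf{Expected difficulty.} The calculation itself is routine. The only points needing care are the following.
\begin{itemize}
\item Rejection must be identified with $\sum_x[q-\nu]_+$. This uses the convention that tokens with $q(x)=0$ are never proposed, so they contribute nothing.
\item The sign factorization $[w(q-p)]_+=w[q-p]_+$ requires $w\ge 0$, which holds since $0<w\le1$.
\item The output distribution must be exactly $\nu$, which is the correctness of speculative sampling applied with $\nu$ in place of $p$.
\end{itemize}
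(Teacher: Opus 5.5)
Your proposal is correct and matches the paper's proof step for step. You obtain the rejection probability as $w\,\TV(q,p)$; the paper expands $\sum_x \min\{q,\nu\}$ where you use $\sum_x [q-\nu]_+$, which is the same calculation. You obtain the bias as $(1-w)\,\TV(q,p)$ from $\nu-p=(1-w)(q-p)$ and then add the two. Your extra checks on the fallback being well defined and on tokens with $q(x)=0$ are not in the paper, but they are correct and harmless.
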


\begin{figure*}[!t]
\begin{mdframed}[linewidth=0.2pt, linecolor=gray,font=\rmfamily]
\textcolor{gray}{[Prompt] Each bird eats 12 beetles per day, ..., how many beetles are eaten each day?}

\textcolor{black}{Each jaguar eats 5 snakes per day, \textcolor{ForestGreen}{so} 6 jaguars will eat 6 ... per day. \textcolor{ForestGreen}{Each} bird eats 12 beetles day, and}

\phantom{Each jaguar eats 5 snakes per \,\,\quad}\textcolor{ForestGreen}{[and]} \textcolor{ForestGreen}{there ...}\textcolor{ForestGreen}{\checkmark} \quad\quad \quad\quad \quad\quad \qquad\quad \textcolor{ForestGreen}{[So]}\textcolor{ForestGreen}{, each bird...}\textcolor{ForestGreen}{\checkmark}

\textcolor{black}{\textcolor{ForestGreen}{there} are 90 birds, so in total they eat 12 * 90 = 1080 beetles per day.}

\textcolor{ForestGreen}{[90] birds eat 12 * 90 = 1080 beetles per day. \textcolor{ForestGreen}{\checkmark}}

\noindent\rule{\linewidth}{0.2pt}

\textcolor{gray}{[Prompt]\! Alexis is applying for a new job ... How much did Alexis pay for the shoes?}

\textcolor{black}{Alexis spent $\$30 + ... + \$18 = \textcolor{red}{\$143}$ on the items she has receipts for. \textcolor{red}{She has $\$16$} left from her budget, so} 

\textcolor{gray}{\phantom{Alexis spent $\$30 + ... + \$18 = $} \textcolor{red}{[\$153] on the ...}} \quad\quad \quad\quad \quad\qquad \quad\qquad  \textcolor{red}{she spent $\$49$. \textcolor{red}{{$\times$}}}

\textcolor{black}{she spent ... $\$184$ \textcolor{red}{in} total. Therefore, she spent $\$184 - \$143 =$ $\$41$ on the shoes.}

\textcolor{gray}{\phantom{she spent ... $\$184$} \textcolor{red}{[on] the shoes ...{$\times$}}}
\end{mdframed}
    \caption{Examples of speculative-decoding verification with accepted mismatches, illustrating not all draft--target mismatches are equally harmful.
    \textbf{Black} marks tokens both models agree on; \textcolor{ForestGreen}{\textbf{green}} marks accepted mismatches that still yield the correct answer; \textcolor{red}{\textbf{red}} marks accepted mismatches that lead to an incorrect answer.}
    \label{fig:example}
\end{figure*}

Together with Theorem \ref{theorem:thm5}, proposition~\ref{prop:optimal} demonstrates that the static ensemble exactly characterizes the Pareto-optimal tradeoff between acceptance probability and distributional bias. 
By varying the ensemble weight $w$, $(\nu_t, P_{\nu_t})$ traverses the entire Pareto front. 

We then quantify how the static ensemble increases the probability of accepting a single draft token.

\begin{lemma}\label{lem:linear}
Let draft and target models with distributions $q_t$ and $p_t$. For a constant $0 < w \leq 1$, define the accept rule $b_{\nu_t}(x) = \min\{1, \frac{\nu_t(x)}{q_t(x)}\}$. Then 
\begin{equation*}
\PP_{\nu_t}\!\big(\tilde{x}_t \mathrm{\; is\; accepted}\mid x_{0:t-1}\big)= 1 - w \, \TV(q,p).
\end{equation*}
The proof is deferred to Appendix~\ref{lem:expect-proof}.
\end{lemma}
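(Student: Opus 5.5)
The plan is to compute the acceptance probability directly by conditioning on the drafted token and then simplify with the standard identity $\sum_x \min\{a(x), b(x)\} = 1 - \TV(a,b)$ for probability vectors $a,b$.

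First, since $\tilde{x}_t \sim q_t$ given the prefix $x_{0:t-1}$, we can write
\begin{equation*}
\PP_{\nu_t}(\tilde{x}_t \text{ accepted}\mid x_{0:t-1}) = \sum_x q_t(x)\min\Big\{1, \frac{\nu_t(x)}{q_t(x)}\Big\} = \sum_x \min\{q_t(x), \nu_t(x)\}.
\end{equation*}
Tokens with $q_t(x)=0$ are never drafted and contribute zero on both sides, so the division causes no trouble. Because $\nu_t$ is a probability distribution (a convex combination of $p_t$ and $q_t$), this sum equals $1 - \TV(q_t, \nu_t)$.

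Second, we compute $\TV(q_t,\nu_t)$. Since $\nu_t - q_t = w(p_t - q_t)$ pointwise,
\begin{equation*}
\TV(q_t,\nu_t) = \tfrac12\sum_x |\nu_t(x) - q_t(x)| = \tfrac{w}{2}\sum_x |p_t(x)-q_t(x)| = w\,\TV(q,p).
\end{equation*}
Here $w>0$ is used to pull it out of the absolute value. Combining the two steps gives $1 - w\,\TV(q,p)$, as claimed in Lemma~\ref{lem:linear}.

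There is no real obstacle. The only points needing care are justifying the min-sum/TV identity and handling zero-probability tokens. For the identity, use $\min\{a,b\} = a - [a-b]_+$ and $\sum_x [a(x)-b(x)]_+ = \TV(a,b)$. This is consistent with Proposition~\ref{prop:optimal}: the rejection probability is $w\,\TV(q,p)$ and the bias is $\TV(\nu_t,p) = (1-w)\TV(q,p)$, and these sum to $\TV(q,p)$.
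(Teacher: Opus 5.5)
Your proof is correct and follows essentially the same route as the paper: both reduce the acceptance probability to $\sum_x \min\{q_t(x),\nu_t(x)\}$ and exploit $\nu_t - q_t = w(p_t - q_t)$. The only cosmetic difference is that you write this sum as $1-\TV(q_t,\nu_t)$ and use homogeneity of TV. The paper instead expands $\min\{q_t,\nu_t\} = q_t + w\min\{0,p_t-q_t\}$ directly before invoking $\sum_x\min\{q_t,p_t\} = 1-\TV(q_t,p_t)$.
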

Lemma~\ref{lem:linear} shows that static ensemble increases the acceptance probability from $1 - \TV(q,p)$ to $1 - w\,\TV(q,p)$. When $w=1$, it recovers the lossless speculative decoding; when $w<1$, it boosts acceptance.

\subsection{Dynamic Verification Relaxed Speculative Decoding}\label{sec:diversed}

\textbf{Motivating example.}
Figure~\ref{fig:example} illustrates speculative-decoding verification under accepted mismatches.
The example highlights that relaxing verification at different positions can have markedly different effects on generation quality: not all draft--target mismatches are equally harmful.
Tokens marked in green can be relaxed without degrading the final output, whereas relaxing the red tokens leads to an incorrect answer.
This observation suggests that verification relaxation should be context-dependent.
To increase acceptance while preserving correctness, the verifier should lean toward the target distribution for quality-critical tokens (e.g., tokens that can derail a mathematical solution), while allowing more aggressive relaxation---and thus higher draft acceptance---for less influential tokens.

We introduce a weighting scheme, where the weight $w_t$ is conditioned on the context $x_{0:t-1}$  at each time step $t$.
To represent the context, we build an ensemble head on top of draft and target models, leveraging both the draft model hidden representation $h_t^q$ and the target model hidden representation $h_t^p$, which together summarize $x_{0:t-1}$.
The ensemble weight is parameterized by a feed-forward neural network:
\begin{equation}\label{eq:fnn}
w_t = f_\theta(h_t^q, h_t^p).
\end{equation}
The resulting dynamic ensemble distribution becomes
\begin{equation}\label{eq:dynamic_ensemble}
\nu^{\theta}_t(x) = w_t p_t(x) + (1 - w_t) q_t(x).
\end{equation}
During verification, a draft token $\tilde{x}_t$ is accepted with this new probability  $b_{\nu_t}(x) =\min\{1, {\nu^{\theta}_t(x)}/{q_t(x)}\}$. Similar to  Section \ref{sec:static-ensemble}, the resulting token distribution  matches the intended ensemble distribution $\PP(x_t=x\mid x_{0:t-1})=\nu^\theta_t(x)$.

\textbf{Training.}
During training, the draft and target models process the same prompt, and the ensemble distribution $\nu^{\theta}_t$ is computed as in Equation~\eqref{eq:dynamic_ensemble}, from which the next token $x_t$ is sampled.
Token-level supervision is typically unavailable~\citep{bachmann2025judge}, so we train $\nu^{\theta}$ with reinforcement learning using a sequence-level reward $R(x_{1:T})$.
For example, in mathematical reasoning, $R=1$ if the final answer is correct and $R=0$ otherwise.
Let $D$ be the dataset of prompts $x_0$.
We optimize a regularized objective that balances task performance and acceptance:
\begin{equation}\label{eq:objective}
\small
\mathbb{E}_{x_0 \sim D,\, x_{1:T} \sim \nu^{\theta}(x_0)}
\big[
R(x_{1:T}) + \lambda{\sum_{t=0}^{T} \big(1 - \TV(q_t, \nu^{\theta}_t)\big)}
\big],
\end{equation}
where the regularizer $\sum_{t=0}^{T} \bigl(1 - \TV(q_t, \nu^{\theta}_t)\bigr)$ promotes higher acceptance and $\lambda>0$ controls its strength. This regularization also discourages degenerate solutions that collapse to using only the target model ($w=0$) or only the draft model ($w=1$).
We optimize objective~\eqref{eq:objective} using the recent REINFORCE++ algorithm~\citep{hu2025reinforce++}, which stabilizes training through clipping and global batch averaging.

\begin{algorithm*}[t]
\caption{General Framework for Lossy Speculative Decoding.}
\label{alg:lossy-sd}
\begin{algorithmic}[1]
\REQUIRE Maximum generation length $T$; draft length $N$; prompt $x_0$; draft model distributions $\{q_t\}$; target model distributions $\{p_t\}$;
acceptance rule $b_t$; fallback distribution $\{P_t\}$.
\STATE Initialize $n \gets 1$;
\WHILE{$n \le T$}
\FOR{$t=n,\ldots,n+N$}
  \STATE  sample
         $\tilde x_t \sim q_t(\cdot \mid x_{0:n-1},\,\tilde x_{n:t-1})$ 
         and save $q_t$ into cache.   \COMMENT{draft rollout}
\ENDFOR
  \FOR{$t=n,\ldots,T$}
  \STATE  compute  $p_t(\cdot \mid x_{0:n-1},\,\tilde x_{n:t-1})$. \COMMENT{target parallel verification (Section~\ref{sec:method})}
  \ENDFOR
  \FOR{$t=n\ldots n+N$}
    \STATE draw $u \sim \mathrm{Uniform}(0,1)$.
    \IF{$u \le b_t(\tilde x_t \mid x_{1:n-1},\,\tilde x_{n:t-1})$}
      \STATE \textbf{accept} $\tilde x_t$; \quad set  $x_n \gets \tilde x_t$; \quad update  $n \gets n+1$.
    \ELSE
      \STATE \textbf{reject}; \, sample $x_n \sim P_t$;\quad  update $n \gets n+1$; \quad \textbf{break}. \COMMENT{discard $\tilde x_{t+1:n+N-1}$ and resume}
    \ENDIF
  \ENDFOR
\ENDWHILE
\RETURN sampled sequence $x_{1:T}$.
\end{algorithmic}
\end{algorithm*}

\textbf{Inference.} At inference, the draft model autoregressively propose $N$ draft tokens $\tilde{x}_{t:t+N-1}$ along with hidden states $h_{t:t+N-1}^q$ and distributions $q_{t:t+N-1}$. 
The target model evaluates these tokens in parallel to produce $h_{t+1:t+N}^p$ and $p_{t:t+N-1}$. The ensemble head then computes weights  $w_{t:t+N-1}$ via Equation~\eqref{eq:fnn} and the corresponding dynamic ensemble distributions $ \nu_{t:t+N-1}^{\theta}(\cdot)$ via Equation~\eqref{eq:dynamic_ensemble} in parallel.
Thus, the relaxed accept rule and fallback distribution are computed as follows:
\begin{equation}\label{eq:diversed_b_p}
\begin{aligned}
 b_{\nu^{\theta}_t}(\tilde{x}_t)& = \min\left\{1, {\nu^{\theta}_t(\tilde{x}_t)}/{q_t(\tilde{x}_t)} \right\},\\
P_{\nu^{\theta}_t}(x) &= \frac{[\nu^{\theta}_t(x) - q_t(x)]_+}{\sum_z [\nu^{\theta}_t(z) - q_t(z)]_+}.
\end{aligned}
\end{equation}
The inference pipeline is in Appendix Algorithm~\ref{alg:inference_diversed}.

\section{THEORETICAL ANALYSIS} \label{sec:theory}

This section derives the exact expected acceptance rate given a draft length $N$. As each accepted token avoids an expensive call to the target model, the key factor driving speedup in speculative decoding is the number of accepted draft tokens~\citep{leviathan2023fast}.
However, this was only approximated under an \textit{i.i.d.} assumption on acceptance probabilities. In practice, this assumption is violated: the acceptance probability of a draft token is highly related to the position of the token in the draft sequence~\citep{huang2024specdec++, Mamou2024DISCO}. 
We therefore provide a precise characterization of the expected acceptance length.

Assume we are in the first round, where the draft model needs to draw a sequence $\hat{x}_{1: N}$ from its probability distribution $q$ given the prompt $x_0$. Then the verifier evaluates whether to accept or reject the $t$-step token.
Let $A_t$ be the acceptance indicator, where $A_t = 1$ if the draft token $\tilde{x}_t$ is accepted given the history $x_{0:t-1}$; otherwise $A_t = 0$. The acceptance length $T_N$ is the number of consecutive draft tokens accepted before the first rejection. When all $N$ are accepted, $T_N=N$.
 Next, we characterize the distribution of the accepted tokens, which will be used for computing the expected accepted length.

\begin{lemma}[Distribution of Accepted Tokens]\label{lem:accept_drift}
    Consider a draft and  target model  with corresponding conditional distributions $q(\cdot |  x_{0:t-1})$ and $p(\cdot |  x_{0:t-1})$ with prefix $x_{0:t-1}$. Given that a token is accepted by speculative decoding, its distribution follows the min-kernel: 
    \begin{align*}
    \PP&(x_t = x |  x_t  \textrm{\; is accepted},x_{0:t-1}) =\\
    &\quad \frac{\min\{q(x |  x_{0:t-1}), \, p(x |  x_{0:t-1})\}}
        {1 - \TV\big(q(\cdot |  x_{0:t-1}), \, p(\cdot |  x_{0:t-1})\big)} 
        \eqqcolon  \tilde{m}(x |  x_{0:t-1})
    \end{align*}
        Proof is deferred to Appendix~\ref{appex:proof_bayes}.
\end{lemma}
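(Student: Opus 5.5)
The plan is a direct Bayes' rule computation. Throughout, the prefix $x_{0:t-1}$ is fixed and we abbreviate $q=q(\cdot\mid x_{0:t-1})$ and $p=p(\cdot\mid x_{0:t-1})$. In speculative decoding the draft token $\tilde{x}_t$ is drawn from $q$, and it is accepted with probability $b_t(\tilde{x}_t)=\min\{1,p(\tilde{x}_t)/q(\tilde{x}_t)\}$. On acceptance we set $x_t=\tilde{x}_t$, so the event $\{x_t = x,\ \text{accepted}\}$ coincides with $\{\tilde{x}_t=x,\ A_t=1\}$.

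The first step is the joint probability. For any $x$ with $q(x)>0$,
\begin{equation*}
\PP(\tilde{x}_t=x,\ A_t=1\mid x_{0:t-1}) = q(x)\min\Bigl\{1,\frac{p(x)}{q(x)}\Bigr\} = \min\{q(x),p(x)\}.
\end{equation*}
When $q(x)=0$, the token $x$ is never drafted, so the joint probability is $0$, which again equals $\min\{q(x),p(x)\}$. The identity therefore holds for every $x$.

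The second step is the normalizer. Summing the joint probability over $x$ gives
\begin{equation*}
\PP(A_t=1\mid x_{0:t-1})=\sum_x \min\{q(x),p(x)\}.
\end{equation*}
To rewrite this, I would use $\min\{a,b\}=a-[a-b]_+$ together with the standard identity $\sum_x [q(x)-p(x)]_+ = \TV(q,p)$. The latter holds because $\sum_x(q(x)-p(x))=0$, so the positive and negative parts have equal mass, each equal to half of the $\ell_1$ distance. This yields
\begin{equation*}
\PP(A_t=1\mid x_{0:t-1}) = 1-\TV(q,p).
\end{equation*}
This is also the $w=1$ case of Lemma~\ref{lem:linear}, so one could cite that lemma directly instead.

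Finally, dividing the joint probability by the acceptance probability gives the min-kernel $\tilde m(x\mid x_{0:t-1})$, which proves the claim. None of these steps is a real obstacle. The only points needing care are the convention $q(x)=0$ and the requirement $\TV(q,p)<1$, which is needed for the conditioning event to have positive probability. I would state that requirement explicitly as an assumption: if $q$ and $p$ have disjoint supports, no draft token is ever accepted and the conditional distribution is undefined.
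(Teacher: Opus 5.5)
Your proposal is correct and matches the paper's proof: Bayes' rule with draft prior $q$ and acceptance likelihood $\min\{1,p/q\}$ gives joint mass $\min\{q(x),p(x)\}$, which is then normalized using $\sum_x \min\{q(x),p(x)\} = 1-\TV(q,p)$. Your explicit treatment of $q(x)=0$, your distinction between $\tilde{x}_t$ and $x_t$, and your assumption $\TV(q,p)<1$ tighten the paper's write-up, which silently writes $\PP(x_t=x\mid x_{0:t-1})=q(x\mid x_{0:t-1})$ for what is really the draft token.
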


Lemma~\ref{lem:accept_drift} shows that accepted tokens are not distributed according to the target distribution, but instead follow the min-kernel $\tilde{m}_t(x)$. For example, if the draft generates $\tilde{x}_{1:N}$ and $x_{1:t}=\tilde{x}_{1:t}$ are all accepted, then $x_{1:t}$ comes from $\tilde{m}$. This is important when characterizing the acceptance rate, as the precondition of draft token $\tilde{x}_t$ being accepted is only when all previous drafts $\tilde{x}_{1:t-1}$ are accepted. We conduct more experiments to illustrate Lemma \ref{lem:accept_drift}; the details can be found in the Appendix \ref{appx:more-exp}.

\nj{\citet{yin2024theoretical} quantified that the number of rejections is $\sum_{t=1}^N\E_{x_{1:t-1}\sim q}[\TV (q(\cdot|x_{0:t-1}),p(\cdot|x_{0:t-1}))]$. %
However, this view ignores the position of rejection. 
In practice, a rejection at the first draft token and a rejection at the \((N-1)\)-th token have very different costs: in the former case, generating a block of \(N\) proposals requires at least \(2N-1\) draft calls plus one target call, whereas in the latter it needs only \(N\) draft calls and one target call. 
\cite{yin2024theoretical} treats both cases as equal cost because it counts only target invocations, yet the acceptance lengths differ markedly (\(T_N=1\) versus \(T_N=N-1\)), leading to different wall-clock times. Motivated by this, we derive a more precise expression for the expected acceptance length.}

\begin{theorem}[Expected Acceptance Length]\label{theom:expected_accept}
    Consider a draft  and target model with corresponding conditional distributions $q(\cdot |  x_{0})$, $p(\cdot |  x_{0})$. Given draft length $N$, 
    the expected acceptance length $\E[T_N]$ is
    \begin{align*}
    \sum_{t=1}^N \E_{x_{1:t-1}\sim \tilde{m}} \left[\prod_{i=0}^{t}\left(1 - \TV\left(q(\cdot | x_{0:i}),  p(\cdot | x_{0:i})\right)\right)\right].
    \end{align*}
where the expectation is taken w.r.t. the min-kernel $\tilde m$ in Lemma~\ref{lem:accept_drift}.
\begin{proof}[Proof Sketch]
We directly characterize $\E[T_N |  \tilde{x}_{1:N}]$ by tower law. As the fact that $T_N$ is non-negative, we know that $\E[T_N | \tilde{x}_{1:N}] = \sum_{t=1}^N \PP(T_N > t |  \tilde{x}_{1:t-1})$, 
and $\PP(T_N > t |  \tilde{x}_{1:t-1})$ is the probability that the first rejection happens after the step $t$ given a draft sequence $\tilde{x}_{1:t-1}$. We then apply the induction rule to compute $\PP(T_N > t |  \tilde{x}_{1:t-1})$, as it can be written as $P(\text{Accept}\,\tilde{x}_t |  \tilde{x}_{1:t-1} \,\text{is accepted})\times P(\text{Accept} \, \tilde{x}_{1:t-1} |  \tilde{x}_{1:t-1})$, where the min-kernel distribution $\tilde{m}$  characterizes the accepted tokens. The full proof is presented in Appendix~\ref{appex:proof_expected_accept}.
\end{proof}
\end{theorem}

Theorem \ref{theom:expected_accept} characterizes the expected accepted draft length using sequential dependent draft and distribution $\{q_t\}$ and $\{p_t\}$ for speculative decoding on sequence length $N$. Moreover, as $1-\TV(q_t, p_t) \le 1$, the probability of accepting the draft token $\tilde{x}_t$ exponentially decrease with the time step $t$.

\section{RELATED WORKS}\label{sec:relate}

\textbf{Speculative decoding (SD)} accelerates autoregressive generation by having a small \emph{draft} model propose tokens that a larger \emph{target} model verifies in parallel.
\citet{leviathan2023fast} introduced lossless SD with exact target-distribution matching.
These methods achieve about \(2\times\) to \(3\times\) wall-clock gains but expose a verification bottleneck.
EAGLE~\citep{li2024eagle}, Medusa~\citep{Cai2024Medusa}, and LayerSkip~\citep{Elhoushi2024LayerSkip} restructure the target model to reuse its computation for drafting, reducing the overhead of a separate draft model.
DISCO~\citep{Mamou2024DISCO} and SpecDec++~\citep{huang2024specdec++} adapt the draft length dynamically, avoiding low acceptance when sequences are too long.
Speculative Cascades~\citep{narasimhan2024faster} unify cascaded inference with SD to trade off quality and cost.
Analyses include a broad empirical study of SD bottlenecks~\citep{yan2024decoding} and a theoretical treatment that formalizes the efficiency–quality trade-off~\citep{yin2024theoretical}.

\begin{figure*}[!t]
	\centering
    \includegraphics[width=\textwidth]{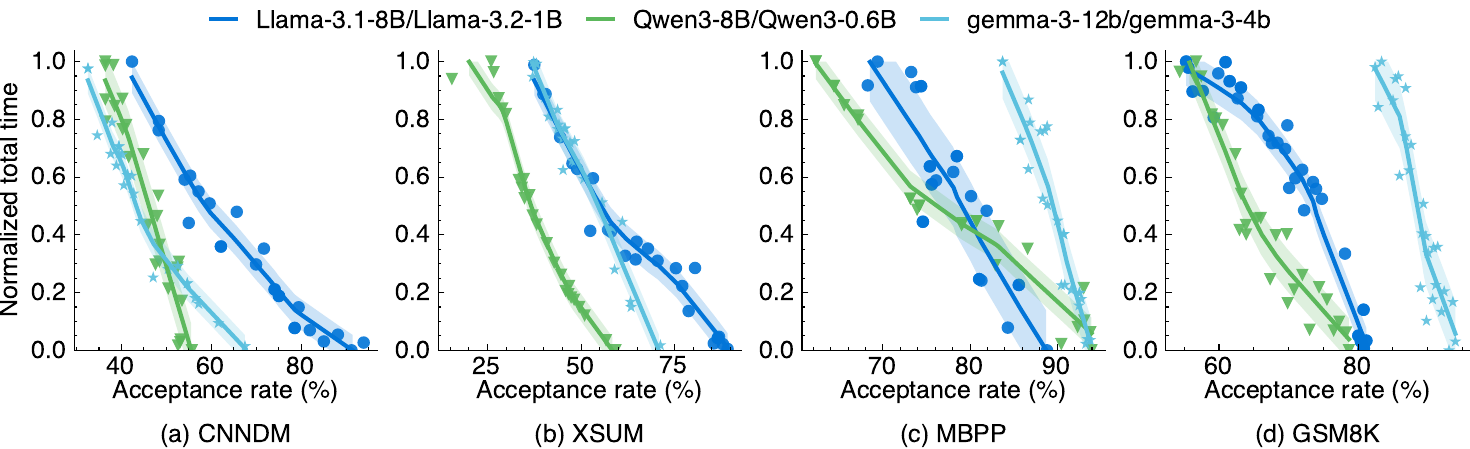}
    \caption{Normalized total time (lower is better) versus acceptance rate (higher is better) under speculative decoding with draft length $N{=}5$. Model pairs with higher acceptance achieve lower decoding time. Results on the CNNDM, XSUM, MBPP, and GSM8K datasets for three target/draft pairs. Additional results for draft lengths $N{=}3$ and $N{=}7$ are provided in Appendix Figures~\ref{fig:acceptance_total_time_len3} and~\ref{fig:acceptance_total_time_len7}.}
\label{fig:acceptance_total_time}
\end{figure*}

\begin{table*}[!ht]
\centering
\caption{Benchmark acceptance rate and generation quality. Columns group results by task while panels (a)--(c) correspond to different \textit{Target}/\textit{Draft} model pairs. All experiments use temperature $1$ and draft length $N=5$. 
Our \method achieves a higher acceptance rate and also maintains comparable quality.
Additional results for temperatures $0$ and $0.1$ are reported in Appendix Tables~\ref{tab:main_acceptancerate_false} and~\ref{tab:main_acceptancerate_small}.
}
\label{tab:main_acceptancerate}
\resizebox{\linewidth}{!}{%
\begin{tabular}{l|cc|cc|cc|cc}
\toprule

&\multicolumn{2}{c}{\textbf{GSM8K}} &
\multicolumn{2}{|c}{\textbf{CNNDM}} &
\multicolumn{2}{|c}{\textbf{XSum}} &
\multicolumn{2}{|c}{\textbf{MBPP}} \\
\textbf{Method} &
\makecell{Accept \\Rate ($\uparrow$)} & \makecell{Quality\\Accuracy\,($\uparrow$)} &
\makecell{Accept \\Rate($\uparrow$)} & \makecell{Quality\\ROUGE-2\,($\uparrow$)} &
\makecell{Accept \\Rate($\uparrow$)} & \makecell{Quality\\ROUGE-2\,($\uparrow$)} &
\makecell{Accept \\Rate($\uparrow$)} & \makecell{Quality\\pass@1\,($\uparrow$)} \\
\midrule
Autoregressive & NA & 67$\%$ & NA & 9.86 & NA & 7.03 & NA & 53$\%$\\
\midrule
SD & 44.60$\%$ & 67$\%$ & 21.60$\%$ & 9.46 & 20.44$\%$ & 7.09 & 26.30$\%$ & 53$\%$ \\
SD (Lossy) & 59.81$\%$ & 66$\%$ & 38.86$\%$ & 10.51 & 40.91$\%$ & 7.96 & 66.75$\%$ & 49$\%$ \\
SpecCascade & 61.53$\%$ & 67$\%$ & 47.29$\%$ & 11.74 & 43.95$\%$ & 7.61 & 73.92$\%$ & 52$\%$ \\
Static Ensemble & 69.49$\%$ & 66$\%$ & 61.06$\%$ & 11.46 & 51.58$\%$ & 7.22 & 68.70$\%$ & 52$\%$ \\
\method\,(ours)  & \textbf{72.61$\%$} & 67$\%$ & \textbf{69.96$\%$} & {12.11} & \textbf{70.53$\%$} & {7.23} & \textbf{85.03$\%$} & {53$\%$} \\
\bottomrule
\multicolumn{9}{c}{ \textbf{(a)} Target/Draft model pair is {Llama-3.1-8B}/{Llama-3.2-1B}.}\\
\toprule

Autoregressive & NA & 90$\%$ & NA & 9.97 & NA & 4.90 & NA & 55$\%$ \\
\midrule
SD & 59.58$\%$ & 91$\%$ & 35.26$\%$ & 9.85 & 15.60$\%$ & 4.95 & 58.72$\%$ & 55$\%$ \\
SD (lossy) & 60.10$\%$ & 86$\%$ & 36.40$\%$ & 10.53 & 19.56$\%$ & 4.90 & 60.85$\%$ & 53$\%$ \\

 SpecCascade & 58.65$\%$ & 86$\%$ & 38.13$\%$ & 10.85 & 19.45$\%$ & 4.97 & 71.27$\%$ & 56$\%$ \\
Static Ensemble & 67.58$\%$ & 86$\%$ & 41.07$\%$ & 10.88 & 21.24$\%$ & 4.98 & 76.11$\%$ & 55$\%$ \\
\method\,(ours)  & \textbf{76.48$\%$} & 88$\%$ & \textbf{46.59$\%$} & 10.97 & \textbf{47.01$\%$} & 5.01 & \textbf{81.46$\%$} & 59$\%$ \\
\bottomrule
  \multicolumn{9}{c}{ \textbf{(b)} Target/Draft model pair is {Qwen3-8B}/{Qwen3-0.6B}.}  \\
\toprule
Autoregressive & NA & 93$\%$ & NA & 9.01 & NA & 8.31 & NA & 68$\%$ \\
\midrule
SD & 84.15$\%$ & 92$\%$ & 40.39$\%$ & 9.06 & 35.76$\%$ & 8.27 & 83.25$\%$ & 67$\%$ \\
SD (Lossy) & 85.02$\%$ & 90$\%$ & 45.43$\%$ & 10.62 & 39.48$\%$ & 8.38 & 86.69$\%$ & 65$\%$ \\
SpecCascade & 84.43$\%$ & 92$\%$ & 51.44$\%$ & 10.42 & 39.42$\%$ & 8.27 & 83.76$\%$ & 67$\%$ \\
Static Ensemble & 87.62$\%$ & 91$\%$ & 54.48$\%$ & 10.84 & 61.42$\%$ & 7.90 & 86.46$\%$ & 66$\%$ \\
\method\,(ours) & \textbf{90.70$\%$} & {92$\%$} & \textbf{66.90$\%$} & 10.86 & \textbf{63.38$\%$} & {7.22} & \textbf{90.23$\%$} & {67$\%$} \\
\bottomrule
\multicolumn{9}{c}{ \textbf{(c)} Target/Draft model pair is {Gemma-3-12B}/{Gemma-3-4B}.}
\end{tabular}%
}
\end{table*}

\textbf{Improving acceptance and adaptive drafting.}
EAGLE~\citep{li2024eagle} rethinks feature uncertainty for speculative sampling, while Medusa~\citep{Cai2024Medusa} equips LLMs with multiple decoding heads to explore continuation trees jointly.
Dynamic lookahead methods such as DISCO~\citep{Mamou2024DISCO} and SpecDec++~\citep{huang2024specdec++} adaptively choose draft lengths, proving threshold-type optimal policies and yielding $7$--$11\%$ further speedups.
~\cite{yan2024decoding} provides a systematic empirical study of bottlenecks across 350 experiments.
SpecInfer~\citep{Miao2023SpecInfer} introduces token-tree verification for parallel throughput.
SSSD and EMS-SD~\citep{Ni2025EMSSD} extend SD to continuous batching and multi-sample inference, respectively, avoiding padding overhead.
Speculative cascades~\citep{narasimhan2024faster} unify cascaded inference with SD to trade off quality and cost.
Draft \& Verify~\citep{Zhang2024DraftVerify} eliminates the separate draft model by skipping intermediate layers.
LayerSkip~\citep{Elhoushi2024LayerSkip} similarly reuses early layers for drafting and later layers for verification, reducing memory while maintaining fidelity.
NEST~\citep{Li2024NEST} augments SD with nearest-neighbor retrieval to improve factuality and attribution, while retaining acceleration benefits.
~\cite{yin2024theoretical} formalize SD under a Markov abstraction, deriving a linear Pareto frontier between efficiency and fidelity.
This theory motivates the exploration of relaxed verification.
Recent work questions the need for exact alignment.
Judge Decoding~\citep{bachmann2025judge} employs an LLM-as-judge to accept semantically valid but non-identical tokens.

Our \method advances this line by proposing a \emph{dynamic ensemble verifier} that adaptively blends target and draft distributions, going beyond the Pareto front obtained by a static ensemble.

\section{EXPERIMENTS} \label{sec:exp}
In experiments, we show \method delivers substantially higher acceptance rates, while preserving task quality, across all model pairs and benchmarks.
We further show that wall-clock speedup closely correlated to {acceptance rate}: increases in acceptance rate reliably translate into end-to-end latency reductions. Finally, we observe a clear efficiency–quality Pareto front, and \method consistently attains a strong balance between the two.

\subsection{Experimental Settings}

For the choice of  LLM models, we consider three target~/~draft model pairs:
(1) Llama-3.1-8B-Instruct~/~Llama-3.2-1B-Instruct~\citep{dubey2024llama},
(2) Qwen3-8B~/~Qwen3-0.6B~\citep{yang2025qwen3}, (3) Gemma-3-12B-It/Gemma-3-4B-It~\citep{team2025gemma}.

In terms of  benchmark datasets, we consider:
(1) GSM8K for mathematical reasoning~\citep{cobbe2021gsm8k},
(2) CNNDM for news summarization~\citep{see-etal-2017-get},
(3) XSum for extreme summarization~\citep{Narayan2018DontGM},
and (4) MBPP for Python programming~\citep{austin2021program}.

For the choice of baselines, we include (1) lossless speculative decoding methods, including standard speculative decoding~\citep{leviathan2023fast, chen2023accelerating} and static ensemble, and (2) representative lossy speculative decoding methods, including SD (Lossy)~\citep{leviathan2023fast} and SpecCascade~\citep{narasimhan2024faster}.
Additional experimental details are provided in Appendix~\ref{appx:exp-set}.

\subsection{Experimental Result and Analysis}
\textbf{Benchmark inference time and acceptance rate.} Figure \ref{fig:acceptance_total_time} reports the normalized wall-clock time needed for different model pairs to complete generation vs the acceptance rate in speculative decoding. 
We measure wall-clock time per model pair as the time required to generate a fixed number of tokens: 384 tokens for GSM8K, 128 for CNNDM and XSum, and 512 for MBPP.
We find that the wall-clock time decreases monotonically as the acceptance rate increases.

\textbf{Benchmark acceptance rate and generation quality.} In Table \ref{tab:main_acceptancerate}, we report the acceptance rate along with the generation quality across different model pairs and various tasks. For the baselines, lossy speculative decoding, SpecCascade, and static ensemble are reported such that the generation quality matches the target model quality.  

Overall, \method consistently achieves a higher acceptance rate than lossy speculative, SpecCascade, and static ensemble.
On the {Llama-3.1-8B}/{Llama-3.2-1B} pair, \method consistently improves the acceptance rate by at least $28\%$ across all tasks while matching the target model generation quality.
This is because the draft model also generates high-quality completions for the summarization, and \method could effectively accept those high-quality draft tokens.

\begin{figure}
    \centering
    \includegraphics[width=\linewidth]{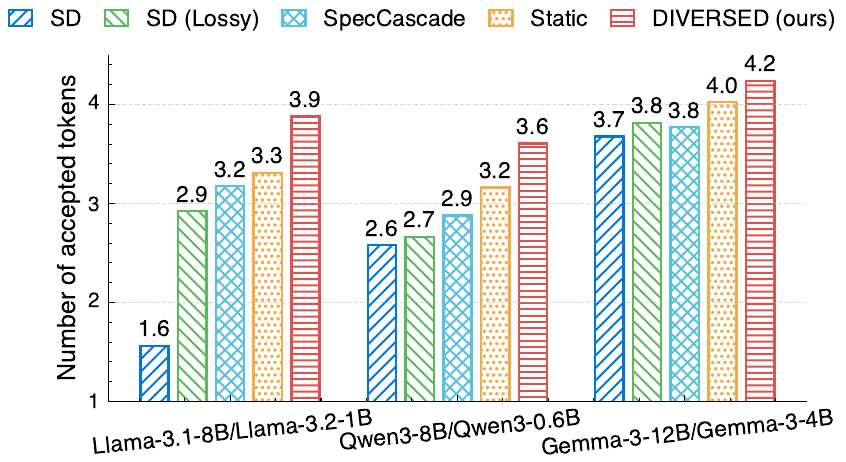}
    \caption{\method consistently accepts a larger average number of draft tokens per verification round than all baselines (draft length $N{=}5$, temperature $1$).
    Appendix Figure~\ref{fig:token_speed_up_temp_small} shows result for  temperature $0.1$.}
    \label{fig:token_speed_up}
\end{figure}

\textbf{Benchmark speedup.} 
Figure~\ref{fig:token_speed_up} summarizes speedup by reporting the average number of draft tokens accepted per verification round for each \textit{Target}/\textit{Draft} pair, averaged over all datasets and generations.

\method achieves the highest accepted-token count across all pairs, reflecting its adaptive, token- and context-dependent blending of the draft and target distributions: it expands the acceptance region for high-confidence draft tokens while controlling distributional deviation from the target.

\textbf{Benchmark inference time and generation quality.}
Figure~\ref{fig:llama_cnndm_comp} plots summarization quality (ROUGE-2) against end-to-end inference time. Methods are preferred when they achieve \emph{higher} ROUGE-2 at \emph{lower} latency.
Compared with autoregressive decoding and standard speculative decoding, the static ensemble traces a Pareto front that provides consistent improved time--quality tradeoffs.
Notably, \method improves upon this frontier, achieving higher quality at comparable latency and/or lower latency at comparable quality, thereby outperforming the Pareto-optimal set induced by the static ensemble.

\begin{table}[!t]
\centering
\caption{Comparison of finetuned draft vs.\ \method at temperature $1$. Fine-tuned draft model yields higher acceptance rates but lower task performance.}
\label{tab:zoro_T1_cnndm}
\resizebox{\linewidth}{!}{%
\begin{tabular}{l|l|cc}
\hline
\textbf{Train/Test} &
\multirow{3}*{\textbf{Method}} &
\multicolumn{2}{c}{\textbf{CNNDM}} \\
\textbf{Dataset}&&
Accept  & ROUGE-2 \\
&& Rate ($\uparrow$) & ($\uparrow$) \\
\hline
 CNNDM & SD & 21.06$\%$ & 9.46 \\
 {\small CNNDM/CNNDM} & \method  & \textbf{69.96$\%$} & \textbf{12.11} \\
 {\small GSM8K/CNNDM} & \method  & 51.07$\%$ & 8.63 \\
\hline
\multicolumn{4}{c}{\textbf{(a)} Draft/Target: Llama-3.2-1B / Llama-3.1-8B} \\
\hline
 CNNDM & SD & 35.26$\%$ & 9.85 \\
 {\small CNNDM/CNNDM} & \method  & \textbf{46.59$\%$} & \textbf{10.97} \\
 {\small  GSM8K/CNNDM }& \method & \textbf{62.55$\%$} & 8.13 \\
\hline
\multicolumn{4}{c}{\textbf{(b)} Draft/Target: Qwen3-0.6B / Qwen3-8B} \\
\hline
 &
 &
\multicolumn{2}{c}{\textbf{GSM8K}} \\
&&
Accept  & Accuracy \\
&& Rate ($\uparrow$) & ($\uparrow$) \\
\hline
 GSM8K & SD & 44.60$\%$ & 67$\%$  \\
 GSM8K/GSM8K & \method & \textbf{72.61$\%$} & \textbf{67$\%$}  \\
 CNNDM/GSM8K & \method & 72.57$\%$ & 62$\%$  \\
\hline
\multicolumn{4}{c}{\textbf{(c)} Draft/Target: Llama-3.2-1B / Llama-3.1-8B} \\
\hline
 GSM8K & SD & 59.58$\%$ & \textbf{91$\%$}  \\
{\small GSM8K/GSM8K} & \method & \textbf{76.48$\%$} & 88$\%$  \\
  {\small CNNDM/GSM8K} & \method & 74.30$\%$ & 78$\%$  \\
\hline
\multicolumn{4}{c}{\textbf{(d)} Draft/Target: Qwen3-0.6B / Qwen3-8B} \\
\end{tabular}%
}
\end{table}

\subsection{Case Studies}

\textbf{Benchmark Zero-shot Performance.}  We train \method on GSM8K and test the trained model on CNNDM, and vice versa (i.e., \method trained on CNNDM and tested on GSM8K).  %

\begin{table*}[!ht]
\centering
\caption{Comparison of finetuned draft vs. \method at two temperatures $T=1.0, 0.1$. Columns are grouped by task. Each block compares SD, SD (Finetuned draft), and DIVERSED for a given \textit{Draft/Target} pair. It shows fine-tuning does not reliably improve acceptance.}
\label{tab:finetune_vs_diversed_T1}
\begin{tabular}{l|c|lc|lc|lc}
\toprule
  &\multirow{3}*{\textbf{Temperature}} &
\multicolumn{2}{c|}{\textbf{GSM8K}} &
\multicolumn{2}{c|}{\textbf{XSum}} &
\multicolumn{2}{c}{\textbf{MBPP}} \\

 & 
& \makecell{Accept \\Rate ($\uparrow$)} & \makecell{Accuracy\\($\uparrow$)} &
 \makecell{Accept \\Rate ($\uparrow$)} & \makecell{ROUGE-2\\($\uparrow$)} &
\makecell{Accept \\Rate ($\uparrow$)} & \makecell{Pass@1\\($\uparrow$)} \\
\hline

SD&  \multirow{3}*{$1.0$} & 44.60$\%$ & 67$\%$ & 20.44$\%$ & 7.09 & 26.30$\%$ & 53$\%$ \\
SD (Finetuned draft) & & 66.44$\%$ & 67$\%$ & 35.41$\%$ & 6.95 & 47.07$\%$ & 53$\%$ \\
\method (Ours) & & \textbf{72.61$\%$} & 67$\%$ & \textbf{70.53$\%$} & \textbf{7.23} & \textbf{85.03$\%$} &  53$\%$ \\
\hline
\multicolumn{8}{c}{\textbf{(a)} Draft/Target model pair is Llama-3.2-1B / Llama-3.1-8B.} \\
\hline

 SD & \multirow{3}*{$1.0$}& 59.58$\%$ & 91$\%$ & 15.60$\%$ & 4.95 & 58.72$\%$ & 55$\%$ \\
 SD (Finetuned draft) & & 57.54$\%$ & 90$\%$ & 26.00$\%$ & 4.66 & 11.70$\%$ & 55$\%$ \\
 \method (Ours) & & \textbf{76.48$\%$} & 88$\%$ & \textbf{47.01$\%$} & \textbf{5.01} & \textbf{81.46$\%$} & \textbf{59$\%$} \\
\hline
\multicolumn{8}{c}{\textbf{(b)} Draft/Target  model pair is Qwen3-0.6B / Qwen3-8B.} \\

 \hline
SD &\multirow{3}*{$0.1$}&  66.40$\%$ & 76$\%$ & 17.69$\%$ & 7.82 & 72.94$\%$ & 61$\%$ \\
SD (Finetuned draft) & & 52.81$\%$ & 76$\%$ & 18.81$\%$ & \textbf{7.97} & 74.32$\%$ & 61$\%$ \\
 \method (Ours)  & & \textbf{89.14$\%$} & 76$\%$ & \textbf{64.75$\%$} & 7.52 & \textbf{89.78$\%$} & \textbf{62$\%$} \\
 \hline
\multicolumn{8}{c}{\textbf{(c)} Draft/Target model pair is Llama-3.2-1B / Llama-3.1-8B.} \\
\hline
SD &\multirow{3}*{$0.1$}&  58.83$\%$ & 92$\%$ & 16.45$\%$ & 5.26 & 83.85$\%$ & 67$\%$ \\
 SD (Finetuned draft) &&  60.45$\%$ & 92$\%$ & 18.16$\%$ & 5.62 & 48.27$\%$ & 67$\%$ \\
  \method (Ours)  &&  \textbf{90.26$\%$} & 92$\%$ & \textbf{71.22$\%$} & \textbf{7.84} & \textbf{91.23$\%$} & 67$\%$ \\
  \hline
\multicolumn{8}{c}{\textbf{(d)} Draft/Target model pair is Qwen3-0.6B / Qwen3-8B.} 
\end{tabular}%
\end{table*}

Compared with standard speculative decoding, \method can be trained on one dataset and evaluated on another (e.g., trained on GSM8K and tested on CNNDM, or vice versa). In these cross-dataset settings, we observe higher acceptance rates but degraded task performance. This is expected: when training and testing on the same dataset, \method can learn task-specific relaxation that better preserves solution quality. These results further suggest that, to maintain strong performance, relaxation of the rejection rule should be task-dependent rather than universal.

\textbf{Fine-tuned draft model \textit{vs.} dynamic ensemble.}
We fine-tune the draft model separately on each dataset and then use the fine-tuned draft within standard speculative decoding. Table~\ref{tab:finetune_vs_diversed_T1} reports results on GSM8K, XSum, and MBPP. Overall, fine-tuning yields accuracy/pass@1 comparable to \method, but \method consistently achieves the highest acceptance rate. Importantly, fine-tuning does not reliably improve acceptance: for the Llama pair, acceptance typically increases after fine-tuning, whereas for Qwen3 it decreases in most settings. These findings suggest that acceptance in speculative decoding is governed primarily by \emph{distributional alignment} between the draft and target conditionals, rather than by the draft model’s standalone task performance. %

\begin{figure}[!t]
\centering
    \includegraphics[width=.97\linewidth]{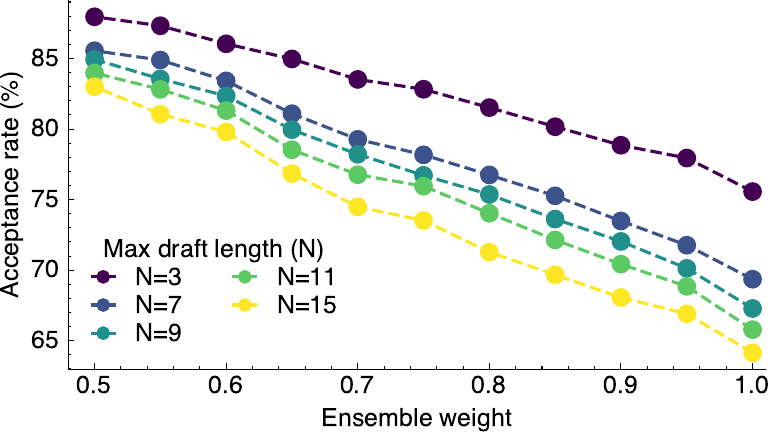}
    \caption{Acceptance rate decreases approximately linearly with the ensemble weight for static ensemble on CNNDM for the \textit{Target}/\textit{Draft} pair {Llama-3.1-8B-Instruct}/{Llama-3.2-1B-Instruct} at temperature $0$.}
    \label{fig:llama_cnndm}
\end{figure}

\textbf{Impact of the static ensemble weight.}
Figure~\ref{fig:llama_cnndm} shows that the acceptance rate decreases as the ensemble weight $w$ increases. In the limit $w \to 1$, static-ensemble verification reduces to standard (lossless) speculative decoding. We report results on CNNDM using the Target/Draft pair Llama-3.1-8B-Instruct / Llama-3.2-1B-Instruct at temperature $0$. This trend is consistent across draft lengths $N$ and aligns with Lemma~\ref{lem:linear}, which predicts that acceptance decreases monotonically with $w$.

\section{CONCLUSION}
We studied lossy speculative decoding as a practical approach to accelerate large language model inference by increasing draft-token acceptance while preserving generation quality.
Our empirical results show that optimal acceptance behavior is inherently context- and task-dependent, limiting the effectiveness of static verification rules.

To address this, we propose \method, a learned \emph{dynamic} ensemble verifier that adapts to the local context at each decoding step and is trained in a task-specific manner.
On the theory side, we derive an exact expression for the expected acceptance length without i.i.d.\ assumptions, yielding a more precise characterization than prior analyses.
Empirically, we find that higher acceptance reliably translates into lower wall-clock latency, and \method consistently achieves higher acceptance at comparable or improved task quality.

Future work includes extending relaxed verification from token-level to block-level decisions and exploring cross-task transfer for the learned dynamic verifier.

\paragraph{Acknowledgments} Ziyi Wang conducted this work during an Amazon internship. Nan Jiang is supported by TACC CCR25054.

\newpage
\bibliography{reference}

\section*{Checklist}

\begin{enumerate}

  \item For all models and algorithms presented, check if you include:
  \begin{enumerate}
    \item A clear description of the mathematical setting, assumptions, algorithm, and/or model. [Yes]
    \item An analysis of the properties and complexity (time, space, sample size) of any algorithm. [Yes]
    \item (Optional) Anonymized source code, with specification of all dependencies, including external libraries. [Yes]
  \end{enumerate}

  \item For any theoretical claim, check if you include:
  \begin{enumerate}
    \item Statements of the full set of assumptions of all theoretical results. [Yes]
    \item Complete proofs of all theoretical results. [Yes]
    \item Clear explanations of any assumptions. [Yes]     
  \end{enumerate}

  \item For all figures and tables that present empirical results, check if you include:
  \begin{enumerate}
    \item The code, data, and instructions needed to reproduce the main experimental results (either in the supplemental material or as a URL). [Yes]
    \item All the training details (e.g., data splits, hyperparameters, how they were chosen). [Yes]
    \item A clear definition of the specific measure or statistics and error bars (e.g., with respect to the random seed after running experiments multiple times). [Yes]
    \item A description of the computing infrastructure used. (e.g., type of GPUs, internal cluster, or cloud provider). [Yes]
  \end{enumerate}

  \item If you are using existing assets (e.g., code, data, models) or curating/releasing new assets, check if you include:
  \begin{enumerate}
    \item Citations of the creator If your work uses existing assets. [Yes]
    \item The license information of the assets, if applicable. [Yes]
    \item New assets either in the supplemental material or as a URL, if applicable. [No]
    \item Information about consent from data providers/curators. [Yes]
    \item Discussion of sensible content if applicable, e.g., personally identifiable information or offensive content. [Not Applicable]
  \end{enumerate}

  \item If you used crowdsourcing or conducted research with human subjects, check if you include:
  \begin{enumerate}
    \item The full text of instructions given to participants and screenshots. [Not Applicable]
    \item Descriptions of potential participant risks, with links to Institutional Review Board (IRB) approvals if applicable. [Not Applicable]
    \item The estimated hourly wage paid to participants and the total amount spent on participant compensation. [Not Applicable]
  \end{enumerate}

\end{enumerate}

\clearpage
\appendix
\onecolumn
\aistatstitle{Supplementary Materials: DIVERSED: Relaxed Speculative Decoding
via Dynamic Ensemble Verification}

\section{Training and Inference Pipeline of DIVERSED} \label{app:implementation_diversed_inference}

\begin{algorithm}[!htbp]
\caption{Inference pipeline of \method.}
\label{alg:inference_diversed}
\begin{algorithmic}[1]
\REQUIRE Horizon $T$; draft length $N$; prompt $x_0$; draft model $\{q_t\}$; target model $\{p_t\}$.
\ENSURE Sequence $x_{1:T}$.
\STATE $n \gets 1$;
\WHILE{$n \le T$}
   \FOR{$t=n$ \TO $n+N$  }
   \STATE sample $\tilde x_t \sim q_t(\cdot | x_{0:n-1},\,\tilde x_{n:t-1})$
          \COMMENT{draft rollout} 
          \STATE cache hidden states $h_{n:n+N}^q$ and distribution $q_{n:n+N}$.
    \ENDFOR
  \FOR{$t=n$ \TO $n+N$}
  \STATE  compute
         $p_t(\cdot | x_{0:n-1},\,\tilde x_{n:t-1})$, and cache hidden states $h_{n:n+N}^p$. \COMMENT{target verification} 
    \ENDFOR
  \STATE
  compute the relaxed accept rule $(b_{\nu^{\theta}_t},P_{\nu^{\theta}_t})$ in equation~\eqref{eq:diversed_b_p}.  \COMMENT{\textcolor{blue}{dynamic ensemble in Section~\ref{sec:diversed}}}
  
  \FOR{$t=n$ \TO $n+N$}
    \STATE Draw $u \sim \mathrm{Uniform}(0,1)$.
    \IF{$u \le b_{\nu^{\theta}_t}(\tilde x_t | x_{1:n-1},\,\tilde x_{n:t-1})$}
      \STATE \textbf{accept} $\tilde x_t$; 
      \STATE set $x_n \gets \tilde x_t$; 
      \STATE update $n \gets n+1$.
    \ELSE
      \STATE \textbf{reject} $\tilde x_t$; \COMMENT{discard $\tilde x_{t+1:T}$ and resume}
      \STATE sample $x_n \sim P_{\nu^{\theta}_t}$;
      \STATE update $n \gets n+1$; 
      \STATE \textbf{break} 
    \ENDIF
  \ENDFOR
\ENDWHILE
\end{algorithmic}
\end{algorithm}

\newpage

\section{Theoretical Justification}

\begin{table}[!ht]
    \centering
     \caption{List of notations and their definitions.}
    \label{tab:symbols}
    \begin{tabular}{r|l}
    \toprule
        \textbf{Notation} & \textbf{Definition} \\ 
    \midrule
        $x_0$ & Input prompt \\
         $N$ & Draft sequence length \\
        $x_{1:N}$ & Sampled accepted token sequence \\
        $T_N$ & Number of rejections when sampling a sequence of length $N$ \\
        \hline
        $M_q$ & Draft (small) model \\
        $M_p$ & Target (large) model \\
        \hline
        $q(\cdot | x_{1:t-1})$ or $q_t(\cdot)$ & Draft (small) model sampling distribution at time $t$. \\
        $p(\cdot | x_{1:t-1})$ or $p_t(\cdot)$ & Target (small) model sampling distribution at time $t$. \\
        \hline
        $\nu$  &  static  ensemble for between draft and target model.\\
        $\nu^\theta$ & Proposed  dynamic ensemble for between draft and target model.\\
    \bottomrule
    \end{tabular}
\end{table}

\subsection{Proof of Proposition \ref{prop:optimal}}
\label{appex:proof_optimal}

\begin{proof}
Recall that $\nu_t$ is defined in Equation~\eqref{eq:nu_t} as the static ensemble
\[
\nu_t(\cdot)= w\,p_t(\cdot) + (1-w)\,q_t(\cdot).
\]
The rejection probability under the acceptance rule $b_{\nu_t}(x)=\min\{1,\nu_t(x)/q_t(x)\}$ is
\begin{align*}
\PP_{\nu_t}(\text{reject}) = 1-\sum_x \min\left\{1, \frac{w p(x)+(1-w)q(x)}{q(x)}\right\}q(x) 
&= 1- \sum_x \min\left\{q(x), w p(x)+(1-w)q(x)\right\} \\
&= w - \sum_x \min\{w q(x), w p(x)\} \\
&= w \left(1- \sum_x\min\{q(x), p(x)\}\right) \\
&= w {\TV}(p, q). 
\end{align*}
Next, the total variation distance between $\nu$ and $p$ satisfies:
\begin{align*}
{\TV}(\nu_t, p) = \sum_x \max\{0, w p(x)+(1-w)q(x)-p(x)\} = \sum_x \max\{0, (1-w)(q(x)-p(x))\} 
&= (1-w) {\TV}(q, p) \\
&= (1-w) {\TV}(p, q). 
\end{align*}   
Combining the two identities gives
\[
\PP_{\nu_t}(\mathrm{reject}) + \TV(\nu_t,p_t) \;=\; \TV(q_t,p_t),
\]
which concludes the proof.
\end{proof}

\subsection{Proof of Lemma~\ref{lem:linear}} \label{lem:expect-proof}

\begin{proof}
Conditioned on $x_{0:t-1}$, the draft proposal $\tilde{x}_t$ is sampled from $q_t$.
The expected probability of accepting the $\tilde{x}_t$ given $x_{0:t-1}$ is:
\begin{align*}
    \E_{x\sim q_t}\left[\min\left\{1, \frac{\nu_t(x)}{q_t(x)}\right\}\right] = \sum_x\min\left\{1, \frac{\nu_t(x)}{q_t(x)}\right\} q_t(x) 
    & = \sum_x \min\left\{q_t(x), \nu_t(x)\right\} \\
\end{align*}
Using $\nu_t(x)=wp_t(x)+(1-w)q_t(x)$ as defined in Equation~\eqref{eq:nu_t}, we obtain
\begin{align*}
    \sum_x \min\{q_t(x),\nu_t(x)\} = \sum_x \min\left\{q_t(x), wp_t(x)+(1-w)q_t(x)\right\}  & = \sum_x q_t(x) + \sum_x \min\left\{0, w(p_t(x)-q_t(x))\right\} \\
    & = 1 - w + \sum_x w\min\left\{q_t(x), p_t(x)\right\} 
\end{align*}
Finally, recall the identity $\TV(q_t,p_t)=1-\sum_x \min\{q_t(x),p_t(x)\}$. Substituting yields
\begin{align*}
(1-w) + w\sum_x \min\{q_t(x),p_t(x)\} = 1 - w\Bigl(1-\sum_x \min\{q_t(x),p_t(x)\}\Bigr) = 1 - w\,\TV(q_t,p_t),
\end{align*}
as claimed.
\end{proof}

\subsection{Proof of Lemma \ref{lem:accept_drift}} \label{appex:proof_bayes}

\begin{proof}
Condition on the prefix $x_{0:t-1}$. By Bayes' rule,
\begin{align*}
\PP(x_t=x | x_t \text{ is accepted},x_{0:t-1})
&=
\frac{\PP(x_t \text{ is accepted} | x_t=x, x_{0:t-1})\,\PP(x_t=x\mid x_{0:t-1})}
{\sum_{x'} \PP(x_t \text{ is accepted} | x_t=x', x_{0:t-1})\,\PP(x_t=x'\mid x_{0:t-1})}.
\end{align*}
Under lossless speculative decoding, a draft proposal $x$ is accepted with probability
$\min\{1,\,\frac{p(x| x_{0:t-1})}{q(x| x_{0:t-1})}\}$, and $\PP(x_t=x\mid x_{0:t-1})=q(x| x_{0:t-1})$.
Substituting these expressions yields
\begin{align*}
\PP(x_t=x | x_t \text{ is accepted},x_{0:t-1})
&=
\frac{\min\!\left\{1,\frac{p(x| x_{0:t-1})}{q(x| x_{0:t-1})}\right\}q(x| x_{0:t-1})}
{\sum_{x'} \min\!\left\{1,\frac{p(x'| x_{0:t-1})}{q(x'| x_{0:t-1})}\right\}q(x'| x_{0:t-1})} =
\frac{\min\{q(x| x_{0:t-1}),\,p(x| x_{0:t-1})\}}
{\sum_{x'} \min\{q(x'| x_{0:t-1}),\,p(x'| x_{0:t-1})\}}.
\end{align*}
Finally, using a standard property of total variation distance
$\sum_{x'} \min\{q(x'| x_{0:t-1}),p(x'| x_{0:t-1})\}
= 1-\TV(q(\cdot| x_{0:t-1}),p(\cdot| x_{0:t-1}))$, we have:
\begin{align*}
    \PP(x_t=x | x_t \text{ is accepted},x_{0:t-1}) = \frac{\min\{q(x | x_{0:t-1}), p(x | x_{0:t-1})\}}{1-\TV(q(x | x_{0:t-1}),  p(x | x_{0:t-1}))}=
\tilde{m}(x \mid x_{0:t-1}).
\end{align*}
This completes the proof.
\end{proof}

\subsection{Proof of Theorem~\ref{theom:expected_accept}}\label{appex:proof_expected_accept}

\begin{proof}
Fix the draft length $N$,   recall that  $A_i$ denote the \emph{acceptance} indicator at step $i$.
$A_i=1$ if the $i$-th draft token $x_i$ is accepted given the prefix $x_{0:i-1}$, and
$A_i=0$ otherwise.

We start by computing the conditional expectation, given a draft generation ${x}_{1:N}$, the expected the acceptance length $T_N$ is
\begin{equation*}
    \E[T_N | {x}_{1:N}] = \sum_{t=1}^{N} \PP(T_N \ge t |  {x}_{1:N}) 
\end{equation*}
where the expectation of an indicator is the probability of its event. The  event $\{T_N\ge t\}$ means the first $t$ draft tokens are accepted, i.e., $\bigcap_{i=1}^{t}\{A_i=1\}$. Since the acceptance decisions across steps are \textit{independent} given the prefixes, the probability can be factorized as:
\begin{align}\label{eq:tail_product}
 \PP(T_N \ge t |  {x}_{1:N}) =  \prod_{i=1}^{t} \PP(A_i = 1 | {x}_{1:N}) = \prod_{i=1}^{t} \mint{i} 
\end{align}
where the acceptance probability $ \PP(A_i = 1 | {x}_{1:N})$ equals to the accept rule $b(x_i)$ under speculative decoding.

Taking the expectation over the draft sequence and using the tower law gives
\begin{align}
    \E[T_N] = \E_{x_{1:N}\sim q} \left[\E [T_N | x_{1:N}] \right]   & = \sum_{x_{1:N}}\left[\sum_{t=1}^{N} \prod_{i=1}^{t} \mint{i}\right] q(x_{1:N}) \nonumber\\
    & = \sum_{x_{1:N}}\left[\sum_{t=1}^{N} \left[\prod_{i=1}^{t} \mint{i}\right] \prod_{i=1}^N q(x_i | x_{0:i-1})\right] \nonumber\\
    & = \sum_{x_{1:N}}\left[\sum_{t=1}^{N} \prod_{i=1}^{t}\min\{q(x_i | x_{0:i-1}), p(x_i | x_{0:i-1})\} \prod_{i=t+1}^{N} q(x_i | x_{0:i-1})  \right] \nonumber\\
    & = \sum_{t=1}^{N} \left[\sum_{x_{1:N}}\prod_{i=1}^{t}\min\{q(x_i | x_{0:i-1}), p(x_i | x_{0:i-1})\} \prod_{i=t+1}^{N} q(x_i | x_{0:i-1})\right] \nonumber \\
    & = \sum_{t=1}^N\underbrace{ \sum_{x_{1:t}} \prod_{i=1}^{t}\min\{q(x_i | x_{0:i-1}), p(x_i | x_{0:i-1})\}}_{\text{denote this part as } S_t}. \label{eq:intermediate_1} 
\end{align}
where we used $q(x_{1:N})=\prod_{i=1}^N q(x_i\mid x_{0:i-1})$ and
$\sum_{x_{t+1:N}}\prod_{i=t+1}^{N} q(x_i\mid x_{0:i-1})=1$.

We expand $S_t$ by summing out the last token $x_t$ first. Since the product separates into
a factor that depends on $x_t$ and a factor that does not, we have
\begin{align*}
S_t
&= \sum_{x_{1:t-1}}
\left(\prod_{i=1}^{t-1}\min\{q(x_i\mid x_{0:i-1}),p(x_i\mid x_{0:i-1})\}\right)
\sum_{x_t}\min\{q(x_t\mid x_{0:t-1}),p(x_t\mid x_{0:t-1})\}.
\end{align*}
Next, we use a identity of TV distance for discrete distributions $\sum_{x}\min\{q(x),p(x)\} = 1-\TV(q,p)$, applied conditionally given the prefix $x_{0:t-1}$. This yields
\begin{align}
S_t
&= \sum_{x_{1:t-1}}
\left(\prod_{i=1}^{t-1}\min\{q(x_i\mid x_{0:i-1}),p(x_i\mid x_{0:i-1})\}\right)
\Bigl(1-\TV(q(\cdot\mid x_{0:t-1}),p(\cdot\mid x_{0:t-1}))\Bigr).
\label{eq:St_one_step}
\end{align}

Now we convert the remaining min-products into expectations under the min-kernel
$\tilde m$ from Lemma~\ref{lem:accept_drift}. We expand the defined definition of $\tilde m$ as:
\[
\min\{q(x_i\mid x_{0:i-1}),p(x_i\mid x_{0:i-1})\}
=
\Bigl(1-\TV(q(\cdot\mid x_{0:i-1}),p(\cdot\mid x_{0:i-1}))\Bigr)\,
\tilde m(x_i\mid x_{0:i-1}).
\]
Substituting this identity into the product in \eqref{eq:St_one_step} for each
$i=1,\dots,t-1$ gives
\begin{align*}
S_t
&=
\sum_{x_{1:t-1}}
\left(\prod_{i=1}^{t-1}
\Bigl(1-\TV(q(\cdot\mid x_{0:i-1}),p(\cdot\mid x_{0:i-1}))\Bigr)\,
\tilde m(x_i\mid x_{0:i-1})\right)
\Bigl(1-\TV(q(\cdot\mid x_{0:t-1}),p(\cdot\mid x_{0:t-1}))\Bigr) \\
&=
\sum_{x_{1:t-1}}
\left(\prod_{i=1}^{t-1}\tilde m(x_i\mid x_{0:i-1})\right)
\left(\prod_{i=0}^{t}
\Bigl(1-\TV(q(\cdot\mid x_{0:i}),p(\cdot\mid x_{0:i}))\Bigr)\right),
\end{align*}

Finally, the sum over $x_{1:t-1}$ is exactly expectation with respect to the
sequential measure induced by $\tilde m$, hence
\[
S_t
=
\E_{x_{1:t-1}\sim \tilde m}\left[
\prod_{i=0}^{t}\Bigl(1-\TV(q(\cdot\mid x_{0:i}),p(\cdot\mid x_{0:i}))\Bigr)
\right].
\]

Combined with Equation~\eqref{eq:intermediate_1}, we finish the proof. 
\end{proof}

\newpage
\section{Extended  Experimental Analysis} \label{appx:exp-set}

\subsection{Implementation Details}

\paragraph{Implementation Detail of DIVERSED.}
The neural network architecture of the proposed \method consists of a single fully connected layer, whose input and output dimensions match the input vector. The activation function used is the sigmoid function.

We train separate ensembles per task.
For GSM8K and MBPP, the outcome reward is task accuracy; for CNNDM and XSum, we use ROUGE-2 \citep{lin2004rouge} following \cite{zhang2018minimum}.
We test temperatures \(0\) (greedy) and \(1\), and draft lengths \(N \in \{3,5,7\}\).
Generation budgets are 128 tokens for CNNDM and XSum, 384 for GSM8K, and 512 for MBPP.
We use few-shot prompting by supplying input–output exemplars that enable the LLM to infer the desired generation style.
All training and inference run on \(8\times\) A100-40GB GPUs.

\begin{table}[!ht]
    \centering
     \caption{Hyper-parameter configuration for each benchmark dataset.}
    \label{tab:param-config}
    \begin{tabular}{r|cccc}
    \toprule
         & CNNDM & XSUM & MBPP & GSM8K \\
    \midrule
        Generation budget & 128 & 128 & 384 & 512 \\
        Hyper Parameter $\lambda$ & 0.5 & 0.5 & 0.3 & 0.1 \\
    Draft length $N$ & \multicolumn{4}{c}{$\{3,5,7\}$} \\
    Rollout Number  & \multicolumn{4}{c}{4}\\
    Learning Rate & \multicolumn{4}{c}{5e-4} \\
    Training Epochs & \multicolumn{4}{c}{2} \\
    \midrule
    \multicolumn{5}{c}{\textbf{(a)} Target/Draft model pair is {Llama3.1-8B-Instruct}/{Llama3.2-1B-Instruct}.}\\
    \midrule
        Generation budget & 128 & 128 & 384 & 512 \\
        Hyper Parameter $\lambda$ & 0.5 & 0.5 & 0.3 & 0.3 \\
    Draft length $N$ & \multicolumn{4}{c}{$\{3,5,7\}$} \\
    Rollout Number  & \multicolumn{4}{c}{4}\\
    Learning Rate & \multicolumn{4}{c}{5e-4} \\
    Training Epochs & \multicolumn{4}{c}{2} \\
    \midrule
    \multicolumn{5}{c}{\textbf{(b)} Target/Draft model pair is {Qwen3-8B}/{Qwen3-0.6B}.}\\
        \midrule
        Generation budget & 128 & 128 & 384 & 512 \\
        Hyper Parameter $\lambda$ & 0.5 & 0.5 & 0.3 & 0.35 \\
    Draft length $N$ & \multicolumn{4}{c}{$\{3,5,7\}$} \\
    Rollout Number  & \multicolumn{4}{c}{4}\\
    Learning Rate & \multicolumn{4}{c}{5e-4} \\
    Training Epochs & \multicolumn{4}{c}{2} \\
    \midrule
    \multicolumn{5}{c}{\textbf{(c)} Target/Draft model pair is {gemma-3-12b-it}/{gemma-3-4b-it}.}
    \end{tabular}
\end{table}

In terms of baseline selection, we omit Medusa~\citep{Cai2024Medusa} and EAGLE~\citep{li2024eagle} because they modify either the target LLM or the draft LLM, whereas \method keeps both models unchanged. A direct comparison would therefore conflate improvements from verification with changes in model architecture, making the comparison less controlled and less fair.

We view \method as complementary to DISCO, Medusa, and EAGLE. In principle, \method could be layered on top of these approaches by replacing their rigid verification step with our relaxed, ensemble-based verifier, potentially yielding additional efficiency gains. However, due to engineering complexity and the limited rebuttal timeline, we did not evaluate these combinations. All baselines we report follow the same line of work that holds the draft and target models fixed and differs only in how the acceptance rule is defined.

\newpage
\subsection{Extended Experiment Results}\label{appx:more-exp}

\begin{table*}[!ht]
\centering
\caption{Reported acceptance rate and generation quality. Columns are grouped by task. Each block compares three baselines and our method for a given \textit{Target}/\textit{Draft} pair. All results use temperature 0 and draft length $N=5$. 
Our \method enjoys a higher acceptance rate and also maintains comparable quality to baselines.}
\label{tab:main_acceptancerate_false}
\resizebox{\linewidth}{!}{%
\begin{tabular}{l|lc|cc|cc|cc}
\toprule
&
\multicolumn{2}{c}{\textbf{GSM8K}} &
\multicolumn{2}{|c}{\textbf{CNNDM}} &
\multicolumn{2}{c|}{\textbf{Xsum}} &
\multicolumn{2}{|c}{\textbf{MBPP}} \\
\cmidrule(lr){2-3}\cmidrule(lr){4-5}\cmidrule(lr){6-7}\cmidrule(lr){8-9}

& \makecell{Accept \\Rate ($\uparrow$)} & \makecell{Quality\\Accuracy\,($\uparrow$)} &
\makecell{Accept \\Rate($\uparrow$)} & \makecell{Quality\\ROUGE-2\,($\uparrow$)} &
\makecell{Accept \\Rate($\uparrow$)} & \makecell{Quality\\ROUGE-2\,($\uparrow$)} &
\makecell{Accept \\Rate($\uparrow$)} & \makecell{Quality\\pass@1\,($\uparrow$)} \\
\midrule
Autoregressive&NA & 80$\%$ &NA & 11.29 &NA & 8.42 &NA & 62$\%$\\
\hline
SD & 61.53$\%$ & 80$\%$ & 71.70$\%$ & 11.26 & 50.24$\%$ & 8.36 & 72.18$\%$ & 62$\%$ \\
Static Ensemble & 82.58$\%$ & 79$\%$ & \textbf{85.04$\%$} & 11.45 & 61.44$\%$ & 7.53 & 84.67$\%$ & 61$\%$ \\
{\method} (ours)  & \textbf{84.82$\%$} & 80$\%$ & {81.92$\%$} & \textbf{12.37} & \textbf{94.81$\%$} & {7.66} & \textbf{85.03$\%$} & {62$\%$} \\
\midrule
  \multicolumn{9}{c}{ \textbf{(a)} Target/Draft model pair is {Llama-3.1-8B}/{Llama-3.2-1B}.}\\
\midrule
Autoregressive&NA & 92$\%$ &NA & 11.48 &NA & 5.38 &NA & 57$\%$ \\
\hline
SD & 63.70$\%$ & 92$\%$ & 36.40$\%$ & 11.38 & 19.56$\%$ & 5.35 & 70.27$\%$ & 57$\%$ \\
Static Ensemble  & 67.58$\%$ & 92$\%$ & 41.60$\%$ & 11.85 & 34.14$\%$ & 5.39 & \textbf{75.70$\%$} & 57$\%$ \\
{\method} (ours)  & \textbf{69.68$\%$} & 93$\%$ & \textbf{58.46$\%$} & 12.64 & \textbf{38.12$\%$} & 5.54 & 74.83$\%$ & \textbf{59$\%$} \\
\midrule
    \multicolumn{9}{c}{ \textbf{(b)} Target/Draft model pair is {Qwen3-8B}/{Qwen3-0.6B}.} \\
\midrule
Autoregressive&NA & 93$\%$ &NA & 10.87 &NA & 9.16 &NA & 66$\%$ \\
\hline
SD & 84.15$\%$ & 92$\%$ & 40.39$\%$ & 10.62 & 35.54$\%$ & 9.04 & 83.89$\%$ & 66$\%$ \\
Static Ensemble & 91.66$\%$ & 92$\%$ & 54.48$\%$ & 10.84 & 56.31$\%$ & 7.90 & 86.69$\%$ & 65$\%$ \\
{\method} (ours) & \textbf{92.35$\%$} & 93$\%$ & \textbf{66.39$\%$} & 11.47 & \textbf{63.38$\%$} & {7.22} & \textbf{90.75$\%$} & {64$\%$} \\
\bottomrule
\multicolumn{9}{c}{ \textbf{(c)} Target/Draft model pair is {Gemma-3-12B}/{Gemma-3-4B}.}\\
\end{tabular}%
}
\end{table*}

\begin{table*}[!ht]
\centering
\caption{Reported acceptance rate and generation quality, under different temperature. Columns are grouped by task. Each block compares three baselines and our method for a given \textit{Target}/\textit{Draft} pair. All results use a temperature of 0.1 and a draft length $N=5$. 
Our \method enjoys a higher acceptance rate and also maintains comparable quality to baselines.
}
\label{tab:main_acceptancerate_small}
\resizebox{\linewidth}{!}{%
\begin{tabular}{l|lc|cc|cc|cc}
\toprule

&\multicolumn{2}{c}{\textbf{GSM8K}} &
\multicolumn{2}{|c}{\textbf{CNNDM}} &
\multicolumn{2}{|c}{\textbf{XSum}} &
\multicolumn{2}{|c}{\textbf{MBPP}} \\
\cmidrule(lr){2-3}\cmidrule(lr){4-5}\cmidrule(lr){6-7}\cmidrule(lr){8-9}
\textbf{Method} &
\makecell{Accept \\Rate ($\uparrow$)} & \makecell{Quality\\Accuracy\,($\uparrow$)} &
\makecell{Accept \\Rate($\uparrow$)} & \makecell{Quality\\ROUGE-2\,($\uparrow$)} &
\makecell{Accept \\Rate($\uparrow$)} & \makecell{Quality\\ROUGE-2\,($\uparrow$)} &
\makecell{Accept \\Rate($\uparrow$)} & \makecell{Quality\\pass@1\,($\uparrow$)} \\
\midrule
Autoregressive & NA & 76$\%$ & NA & 9.94 & NA & 7.56 & NA & 62$\%$\\
\hline
SD & 66.41$\%$ & 76$\%$ & 17.69$\%$ & 9.92 & 37.67$\%$ & 7.82 & 72.94$\%$ & 61$\%$ \\
SD (Lossy) & 67.42$\%$ & 76$\%$ & 17.85$\%$ & 10.17 & 38.83$\%$ & 7.97 & 73.37$\%$ & 61$\%$ \\
SpecCascade & 68.01$\%$ & 76$\%$ & 52.20$\%$ & 10.19 & 40.22$\%$ & 7.53 & 84.15$\%$ & 61$\%$ \\
Static Ensemble & 72.36$\%$ & 75$\%$ & 87.58$\%$ & 10.46 & 51.20$\%$ & 7.31 & 80.66$\%$ & 61$\%$ \\
{\method} (ours)  & \textbf{89.14$\%$} & 76$\%$ & \textbf{92.11$\%$} & {11.65} & \textbf{64.75$\%$} & {7.52} & \textbf{89.78$\%$} & {62$\%$} \\
\bottomrule
\multicolumn{9}{c}{ \textbf{(a)} Target/Draft model pair is {Llama-3.1-8B}/{Llama-3.2-1B}.}\\
\midrule

Autoregressive & NA & 92$\%$ & NA & 11.02 & NA & 5.21 & NA & 57$\%$ \\
\hline
SD & 58.83$\%$ & 92$\%$ & 34.19$\%$ & 10.35 & 16.45$\%$ & 5.26 & 69.49$\%$ & 56$\%$ \\
SD (lossy) & 59.85$\%$ & 92$\%$ & 35.32$\%$ & 10.74 & 19.56$\%$ & 4.90 & 69.71$\%$ & 56$\%$ \\
 SpecCascade & 59.71$\%$ & 92$\%$ & 37.67$\%$ & 11.30 & 32.98$\%$ & 4.91 & 78.32$\%$ & 56$\%$ \\
Static Ensemble & 88.75$\%$ & 92$\%$ & 40.67$\%$ & 10.90 & 30.80$\%$ & 5.19 & \textbf{79.86$\%$} & 55$\%$ \\
{\method} (ours)  & \textbf{91.84$\%$} & 92$\%$ & \textbf{46.40$\%$} & 11.37 & \textbf{45.87$\%$} & 5.25 & {75.56$\%$} & 59$\%$ \\
\bottomrule
  \multicolumn{9}{c}{ \textbf{(b)} Target/Draft model pair is {Qwen3-8B}/{Qwen3-0.6B}.}  \\
\midrule
Autoregressive & NA & 93$\%$ & NA & 9.01 & NA & 8.27 & NA & 66$\%$ \\
\hline
SD & 81.85$\%$ & 92$\%$ & 44.27$\%$ & 9.00 & 39.63$\%$ & 8.27 & 83.85$\%$ & 67$\%$ \\
SD (Lossy) & 82.75$\%$ & 92$\%$ & 48.73$\%$ & 10.71 & 40.08$\%$ & 8.37 & 84.31$\%$ & 66$\%$ \\
SpecCascade & 83.50$\%$ & 93$\%$ & 48.72$\%$ & 10.72 & 39.42$\%$ & 8.79 & 84.68$\%$ & 67$\%$ \\
Static Ensemble & 89.67$\%$ & 91$\%$ & 47.47$\%$ & 10.74 & 62.65$\%$ & 8.31 & 88.34$\%$ & 66$\%$ \\
{\method} (ours) & \textbf{90.26$\%$} & {92$\%$} & \textbf{64.26$\%$} & 11.60& \textbf{71.22$\%$} & {7.84} & \textbf{91.23$\%$} & {67$\%$} \\
\bottomrule
\multicolumn{9}{c}{ \textbf{(c)} Target/Draft model pair is {Gemma-3-12B}/{Gemma-3-4B}.}
\end{tabular}%
}
\end{table*}

\newpage

\begin{figure}[!hbtp]
\centering
\includegraphics[width=0.6 \linewidth]{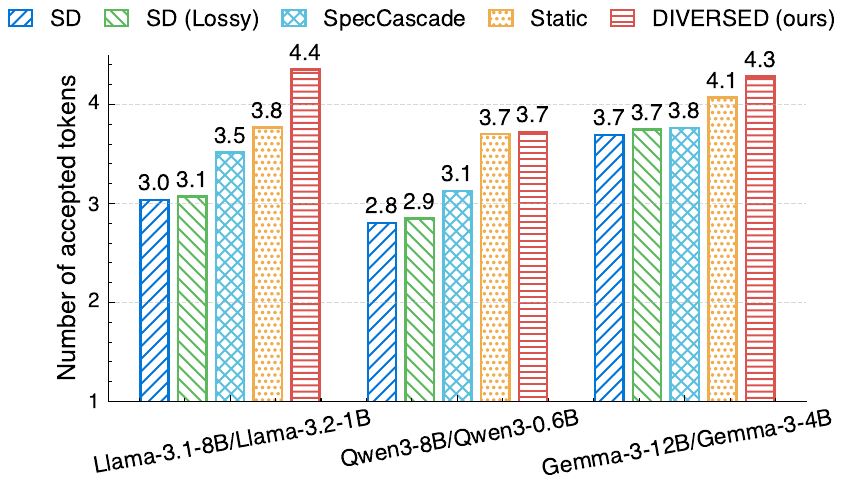} 
\caption{Average number of tokens accepted per verification round (temperature $=0.1$, draft length $N=5$) across three target/draft model pairs. Our \method consistently surpasses all baselines by achieving a higher number of accepted tokens.}
\label{fig:token_speed_up_temp_small}
\end{figure}

\begin{figure*}[!hbtp]
	\centering
    \includegraphics[width=\textwidth]{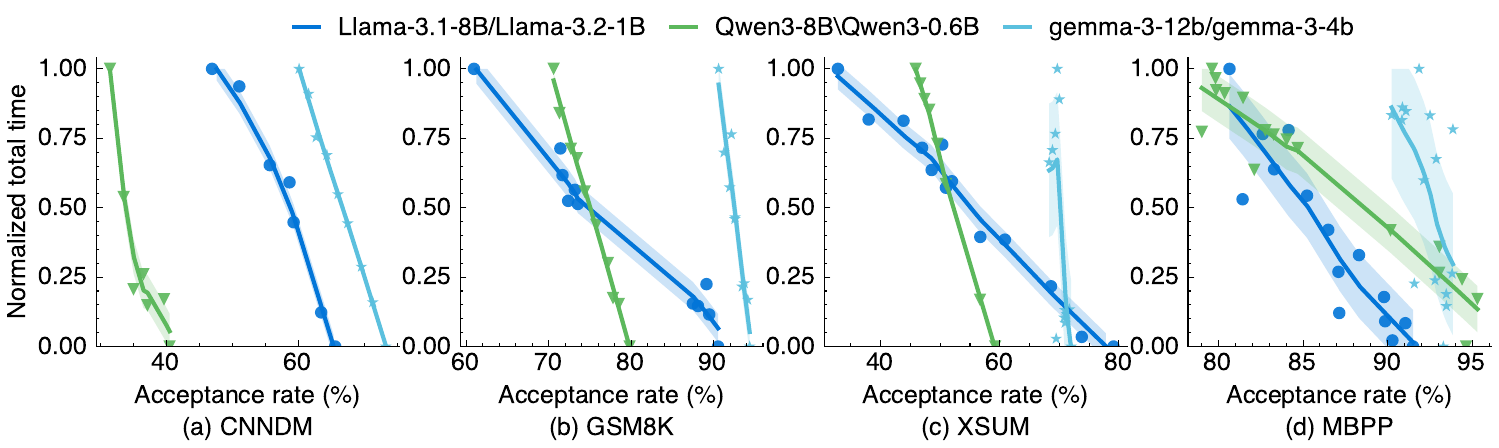}
    \vspace{-3mm}
    \caption{Results on the CNNDM, XSUM, MBPP, and GSM8k datasets for three target/draft pairs: normalized total time (lower is better) versus acceptance rate (higher is better) under speculative decoding with $N{=}3$. Model pairs with higher acceptance achieve lower decoding time.}
\label{fig:acceptance_total_time_len3}
\end{figure*}

\begin{figure*}[!hbtp]
	\centering
    \includegraphics[width=\textwidth]{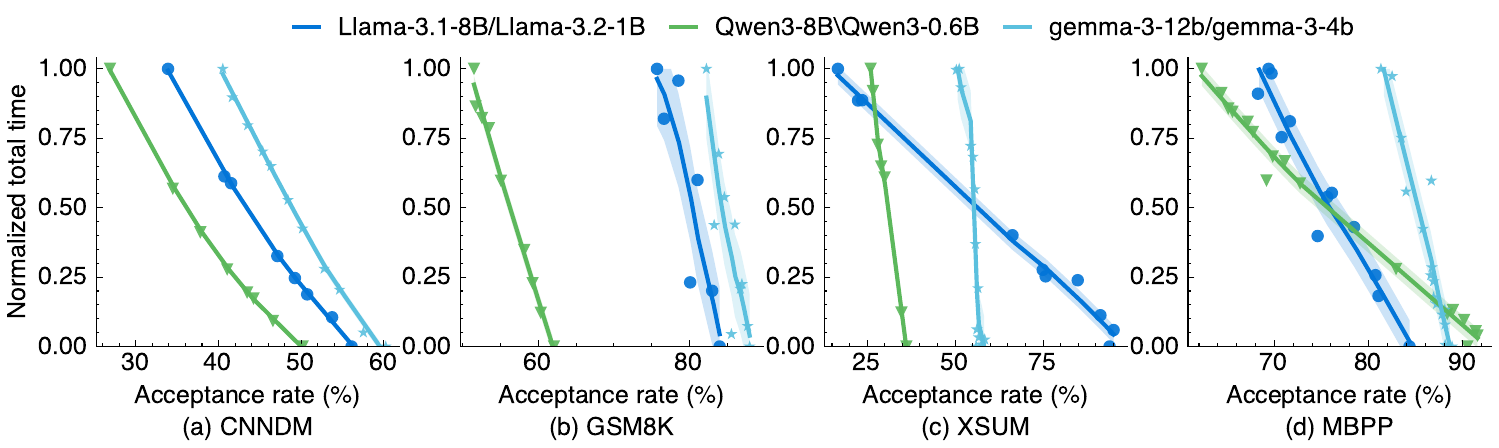}
    \caption{Results on the CNNDM, XSUM, MBPP, and GSM8k datasets for three target/draft pairs: normalized total time (lower is better) versus acceptance rate (higher is better) under speculative decoding with $N{=}7$. Model pairs with higher acceptance achieve lower decoding time.}
\label{fig:acceptance_total_time_len7}
\end{figure*}

\begin{figure*}[!hbtp]
	\centering
    \includegraphics[width=\textwidth]{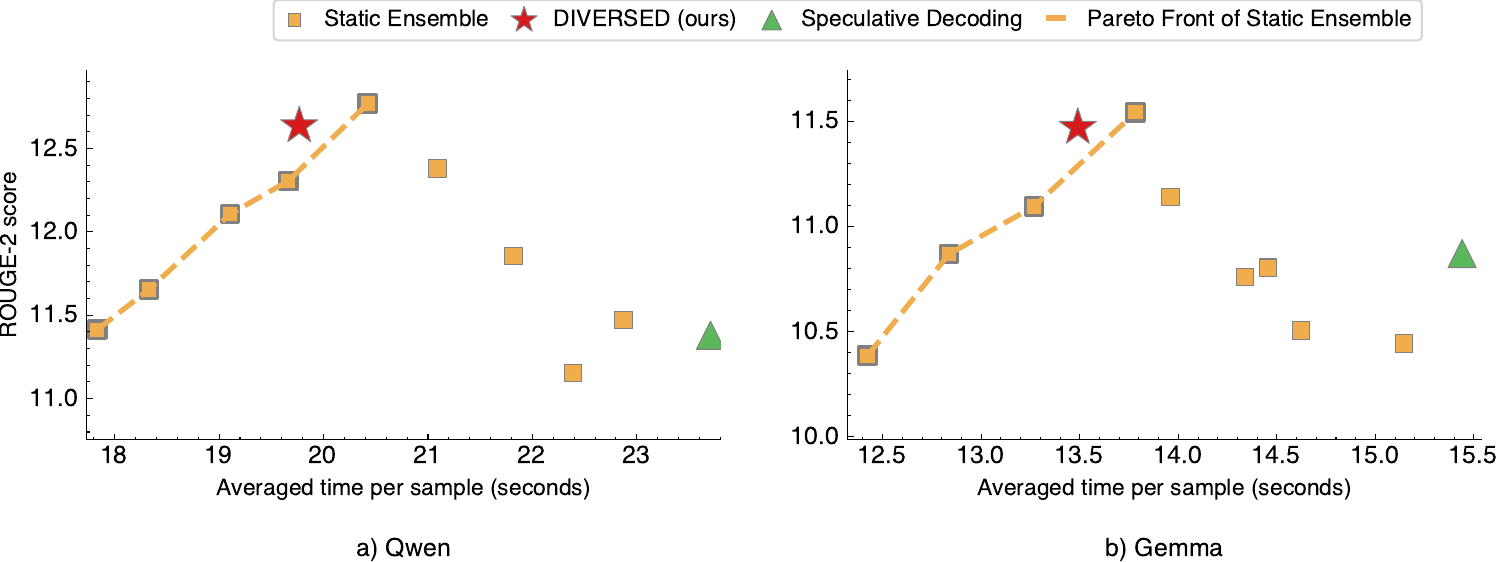}
    \caption{Our \method outperforms the Pareto front of the static ensemble and other baselines. 
	Results are reported on the CNNDM dataset using the target/draft model pair: Qwen3-8B/Qwen3-0.6B and Gemma-3-12b-it/Gemma-3-4b-it, with temperature set to $0$.}
\label{fig:rouge_time_qwen_gemma}
\end{figure*}

\newpage
\clearpage
\subsection{GSM8K Case Study}

\SampleBoxText{Case Study on the GSM8K dataset using Llama model.}{Question: Marissa is hiking a 12-mile trail. She took 1 hour to walk the first 4 miles, then another hour to walk the next two miles. If she wants her average speed to be 4 miles per hour, what speed (in miles per hour) does she need to walk the remaining distance?}

\begin{tabularx}{\textwidth}{@{} l X @{}}
	\toprule

	Autoregressive &  Marissa walks the remaining 6 miles at her desired average speed of 4 miles per hour, so the time it will take her to walk the remaining distance is 6 / 4 = 1.5 hours.
    
    So in total she will have spent 1.5 + 1 + 1 = 3.5 hours walking the 12-mile trail.
    
    $\#\#\# 6$

 \\
	\midrule
SD &   She has walked 4 + 2 = 6 miles in 2 hours.

So, she has 12 - 6 = 6 miles left to walk.

She wants to walk the entire trail in 12 / 4 = 3 hours.

So, she needs to walk the remaining 6 miles in 3 - 2 = 1 hour.

Therefore, she needs to walk the remaining 6 miles in 6 / 1 = 6 miles per hour.

$\#\#\# 6$
 \\
	\midrule
	SD (Lossy) & She has walked 4 + 2 = 6 miles in 2 hours.
    
    So, she has 12 - 6 = 6 miles left to walk.
    
    She wants her average speed to be 4 miles per hour, so she needs to walk the remaining 6 miles in 6 / 4 = 1.5 hours.
    
    Therefore, she needs to walk the remaining distance at a speed of 6 / 1.5 = 4 miles per hour.
    
   \textcolor{blue}{ $\#\#\# 4$}

 \\
	\midrule
SpecCascade &   First, let's find out how long it will take her to walk the entire trail. We know that she will need to walk 12 miles at a speed of 4 miles per hour, so the time it will take her to walk the entire trail is 12 / 4 = 3 hours. 

She already walked 6 miles and it took her 2 hours to do so, so the time she has left is 3 - 2 = 1 hour. 

Therefore, she needs to walk 12 - 6 = 6 miles in 1 hour, so she needs to walk at a speed of 6 / 1 = 6 miles per hour.

$\#\#\# 6$

 \\
	\midrule
	Static Ensemble &   She has walked 4 miles in 1 hour, and 2 miles in another hour, so the total distance covered is 6 miles in 2 hours, which means her average speed is 6 / 2 = 3 miles per hour. 
    
    To cover the remaining 6 miles, she needs to cover 3 miles per hour, which means 6 miles in 2 hours. Therefore, her average speed must be 6 / 2 = 3 miles per hour.
    
    \textcolor{blue}{$\#\#\# 3$.}
     \\
	\midrule
	\method &   

    She has walked 4 + 2 = 6 miles in 1 + 1 = 2 hours.
    
    She has 12 - 6 = 6 miles left to walk.
    
    She wants to walk the entire trail in 12 / 4 = 3 hours.
    
    So she has 3 - 2 = 1 hour to walk the remaining 6 miles.
    
    Therefore, she needs to walk 6 / 1 = 6 miles per hour.

    $\#\#\# 6$

\\
	\midrule
	\midrule

\end{tabularx}

\newpage
\SampleBoxText{Case Study on the GSM8K dataset using Qwen model.}{Question: Marissa is hiking a 12-mile trail. She took 1 hour to walk the first 4 miles, then another hour to walk the next two miles. If she wants her average speed to be 4 miles per hour, what speed (in miles per hour) does she need to walk the remaining distance?} 

\begin{tabularx}{\textwidth}{@{} l X @{}}
	\toprule
	Autoregressive & \textcolor{orange}{To calculate the total time} she needs to complete the trail for an average speed of 4 mph, we calculate the total time as total distance divided by average speed, which is 12 miles / 4 mph = 3 hours.
    
    So far, she has spent 1 + 1 = 2 hours. This leaves her with 3 - 2 = 1 hour to walk the remaining distance.
    
    She has 12 - 4 - 2 = 6 miles left. Therefore, she needs to walk 6 miles in 1 hour, which is a speed of 6 mph.
    
    $\#\#\# 6$

 \\
	\midrule
SD &   \textcolor{green}{The total distance of the trail is 12 miles.} So far, she has walked 4 + 2 = 6 miles. Therefore, she has 12 - 6 = 6 miles left. She wants her average speed to be 4 miles per hour for the entire trail. To find the total time she has for the whole trail, we can use the formula: total time = total distance / average speed = 12 miles / 4 mph = 3 hours. She has already spent 1 + 1 = 2 hours. Therefore, she has 3 - 2 = 1 hour left to walk the remaining 6 miles. To find the required speed, we can use the formula: speed = distance / time = 6 miles / 1 hour = 6 miles per hour. So, she needs to walk the remaining distance at 6 miles per hour.

$\#\#\# 6$
 \\
	\midrule
	SD (Lossy) & \textcolor{blue}{To find the average speed}, we need to divide the total distance by the total time. Marissa has already walked 4 + 2 = 6 miles. She has 12 - 6 = 6 miles left. She has already spent 1 + 1 = 2 hours. To have an average speed of 4 miles per hour, she needs to complete the 12-mile trail in 12 / 4 = 3 hours. She has 3 - 2 = 1 hour left to walk the remaining 6 miles. Therefore, she needs to walk at a speed of 6.

    $\#\#\# 6$

 \\
	\midrule
SpecCascade &   \textcolor{blue}{To find the average speed}, we need to divide the total distance by the total time. Marissa has already walked 4 + 2 = 6 miles. She has 12 - 6 = 6 miles left. She has already spent 1 + 1 = 2 hours. To have an average speed of 4 miles per hour, the total time for the entire trail should be 12 / 4 = 3 hours. Therefore, she has 3 - 2 = 1 hour left to walk the remaining 6 miles. So, she needs to walk at a speed of 6.

$\#\#\# 6$

 \\
	\midrule
	Static Ensemble &  \textcolor{blue}{To find the average speed}, we need to divide the total distance by the total time. Marissa has already walked 4 + 2 = 6 miles. She has 12 - 6 = 6 miles left. She has already spent 1 + 1 = 2 hours. She wants her average speed to be 4 miles per hour, so the total time for the entire trail should be 12 / 4 = 3 hours. Therefore, she has 3 - 2 = 1 hour left to walk the remaining 6 miles. So, she needs to walk at a speed of 6 / 1 = 6 miles per hour.

    $\#\#\# 6$

     \\
	\midrule
	\method &   

    \textcolor{blue}{To find the average speed}, we need to divide the total distance by the total time. The total distance is 12 miles. The total time is 2 hours for the first 6 miles. She wants her average speed to be 4 miles per hour, so the total time should be 12 / 4 = 3 hours. Therefore, she needs to walk the remaining 6 miles in 3 - 2 = 1 hour.  So her speed for the remaining distance should be 6 / 1 = 6 miles per hour.

    $\#\#\# 6$

\\
	\midrule
	\midrule

\end{tabularx}

\newpage
\SampleBoxText{Case Study on the GSM8K dataset using Gemma model.}{Question: Marissa is hiking a 12-mile trail. She took 1 hour to walk the first 4 miles, then another hour to walk the next two miles. If she wants her average speed to be 4 miles per hour, what speed (in miles per hour) does she need to walk the remaining distance?} 

\begin{tabularx}{\textwidth}{@{} l X @{}}
	\toprule
	Autoregressive & The remaining distance is 12 - 4 - 2 = 6 miles.
    
    The total time she wants to take for the whole trail is 12 / 4 = 3 hours.
    
    She has already spent 1 + 1 = 2 hours hiking, so she has 3 - 2 = 1 hour left.
    
    The required speed for the last 6 miles in 1 hour is 6 / 1 = 6 miles per hour.
    
    $\#\#\# 6$

 \\
	\midrule
SD &  Total distance is 12 miles, and she wants her average speed to be 4 miles per hour.

So, the total time she can spend hiking is 12/4 = 3 hours.

She has already spent 1 + 1 = 2 hours.

Therefore, she can hike the remaining distance for 3 - 2 = 1 hour.

The remaining distance is 12 - 4 - 2 = 6 miles.

So, she needs to walk 6 miles in 1 hour, which means she needs to walk at a speed of 6 miles per hour.

$\#\#\# 6$

 \\
	\midrule
	SD (Lossy) & The total distance of the trail is 12 miles.
    
    Marissa has already walked 4 + 2 = 6 miles.
    
    The remaining distance is 12 - 6 = 6 miles.
    
    She wants her average speed to be 4 miles per hour for the entire 12-mile trail.
    
    So, the total time she should take to complete the 12-mile trail is 12 / 4 = 3 hours.
    
    She has already taken 1 + 1 = 2 hours.
    
    The remaining time she has is 3 - 2 = 1 hour.
    
    To walk the remaining 6 miles in 1 hour, she needs to walk at a speed of 6 / 1 = 6 miles per hour.

    $\#\#\# 6$

 \\
	\midrule
SpecCascade &  Total distance is 12 miles.
Distance covered is 4 + 2 = 6 miles.
Remaining distance is 12 - 6 = 6 miles.
She wants her average speed to be 4 miles per hour over the entire 12-mile trail.

Total time to complete the trail should be 12/4 = 3 hours.
Time spent so far is 1 + 1 = 2 hours.
Remaining time is 3 - 2 = 1 hour.
So she needs to walk the remaining 6 miles in 1 hour.
Therefore, the required speed is 6/1 = 6 miles per hour.

$\#\#\# 6$

 \\
	\midrule
	Static Ensemble & Total distance is 12 miles. Marissa has already walked 4 + 2 = 6 miles. The remaining distance is 12 - 6 = 6 miles.
    
    She wants her average speed to be 4 miles per hour over the entire 12-mile trail. So the total time she should take to walk the entire trail is 12 / 4 = 3 hours.
    
    She has already walked for 1 + 1 = 2 hours. Therefore, she has 3 - 2 = 1 hour to walk the remaining 6 miles.
    
    The speed she needs to walk the remaining distance is 6 miles / 1 hour = 6 miles per hour.

    $\#\#\# 6$

     \\
	\midrule
	\method &   

        The total distance she has to cover is 12 miles. She already covered 4 + 2 = 6 miles. She needs to cover 12 - 6 = 6 miles. So she needs to cover the remaining 6 miles in the remaining time. To find her average speed for the entire hike, she needs to calculate the total time taken for the entire hike and then determine how much time she has left to cover the remaining 6 miles.
        Her average speed is 4 miles per hour for the entire hike. So her total time should be 12 miles / 4 miles per hour = 3 hours.
        She has already taken 1 + 1 = 2 hours. Therefore, she has 3 - 2 = 1 hour left to cover the remaining 6 miles. To find the speed she needs to walk the remaining 6 miles, we can calculate 6 miles / 1 hour = 6 miles per hour.

        $\#\#\# 6$

\\
	\midrule
	\midrule

\end{tabularx}

\newpage

\subsection{CNNDM Case Study}

\SampleBoxText{Case Study on the CNNDM dataset using Llama model.}{\small News:     President Barack Obama held up the shrinking Florida Everglades today as proof positive that climate change is real and is threatening America's 'national treasures,' as well as the economies of their surrounding towns, which rely heavily on tourist dollars. 'Climate change can no longer be denied. It can't be edited out,' he said this afternoon in a speech at Everglades National Park commemorating Earth Day. 'It can't be omitted from the conversation. And action can no longer be delayed.' 'That's why I've committed the United States to lead the world in combatting this threat,' he said, throwing a spotlight on his administration's efforts to reduce the rate of global warming through the reduction of carbon emissions. The day trip to South Florida, on which Obama was accompanied by Bill Nye 'the Science Guy,' also highlighted the massive amount of fuel it takes to power Air Force One, however, 9,000 gallons, round trip. Scroll down for video. President Barack Obama takes a walking tour of the Anhinga Trail at Everglades National Park. Obama visited the subtropical swamps of the park today as part of a push to get Americans thinking and talking about the damage climate change is causing close to home . Obama held up the shrinking Florida Everglades today as proof positive that climate change is real and is threatening America's 'national treasures,' as well as the economies of their surrounding towns, which rely heavily on tourist dollars . 'Climate change can no longer be denied. It can't be edited out,' he said this afternoon in a speech at Everglades National Park commemorating Earth Day. 'It can't be omitted from the conversation. And action can no longer be delayed' Vehicles in the president's motorcade are seen in the Everglades National Park where Obama stopped for a visit on the Anhinga trail in Homestead. Asked earlier this week if Obama's gas-guzzling trip to Florida would undermine his Earth Day message, the White House said it would not and reaffirmed Obama's efforts to reduce pollutants . On of the helicopters escorting Marine One with President Barack Obama onboard, prepares to land at Miami's International Airport . I love the smell of jet fuel, Nye told reporters as he waited to board the president's 747 this morning for the flight to Florida, during which the White House said he would film a video with Obama. Asked earlier this week if Obama's gas-guzzling trip to Florida would undermine his Earth Day message, the White House said it would not and reaffirmed Obama's efforts to reduce pollutants. Obama took a half -hour walking tour of the the Anhinga Trail at the 1.5-million-acre national park today before making a plea for Americans to save the Everglades and other natural gems by making environmentally-conscious decisions. 'I can't think of a better way to spend Earth Day than in one of our nation's greatest natural treasures, the Everglades,' he told his audience, calling the swamp, which he acknowledged is not technically a swamp, 'magical.' 'Climate change is threatening this treasure and the communities that depend on it,' Obama warned. 'If we don't act,there may not be an Everglades as we know it.' The president said the fallout from rising temperatures is 'not a problem for another generation.' At least, 'not anymore,' he said. 'This is a problem now. It has serious implications for the way we live right now.' Obama said the rising sea level in South Florida and inward flow of salt water is evidence that climate change is already having a negative effect on communities. 'This is not some impossible problem that we cannot solve. We can solve it if we've got some political will,' he said. 'And we can solve it in a way that creates jobs. We can solve it in a way that doesn't disrupt our economy but enhances our economy. And it's a bipartisan issue.' 'That's why I've committed the United States to lead the world in combatting this threat,' he said, throwing a spotlight on his administration's efforts to reduce the rate of global warming through the reduction of carbon emissions . An alligator is spotted sunning itself along the Anhinga Trail at Everglades National Park as Obama took a tour . Wiping sweat from his brow as he braved the subtropical temperatures to make his mark on Earth Day, the president mingled with crowd-goers at the mid-day event before returning to Washington on Air Force One. Previewing the day trip, the White House had reiterated  Obama's belief that 'no challenge poses a greater threat to future generations than climate change.' It's statement pointed out that last year was the  warmest year on record and that 14 of the 15 of earth's hottest years happened this century. It said, 'Climate change is also affecting some of the most iconic places in our country, from disappearing glaciers in Glacier National Park to dying Joshua Trees in Joshua Tree National Park.' 'Recognized worldwide as a unique and treasured landscape, the Everglades is a perfect example of the threat we face from climate change, including rising sea levels that result in shoreline erosion and increased flooding.' Obama said it is this generation's 'turn to ensure that' natural monuments remain the 'birthright of all Americans for generations to come.' Wiping sweat from his brow as he braved the subtropical temperatures to make his mark on Earth Day, the president mingled with crowd-goers at the mid-day event before returning to Washington on Air Force One . The White House also asserted that climate change was already displacing animals and plants, including tropical orchids, some of which are only found in south Florida.' 'We are blessed with the most beautiful God-given landscape in the world,' the president said today. 'It's an incredible bounty that's been given to us. But we've got to be good stewards for it. We have to take care of it.' Obama said it is this generation's 'turn to ensure that' natural monuments remain the 'birthright of all Americans for generations to come.' 'So many people here are active in your communities, doing what's needed. The young people who are here, the next generation, they're way ahead of us in understanding how important this is . 'Let's make sure we don't disappoint them. Let's stand up and do what's right before it's too late,' he concluded. Obama was accompanied by Bill Nye 'the Science Guy' on the day trip. He also gave Florida Rep. Debbie Wasserman Schultz, who he's seen here hugging, and Interior Secretary Sally Jewell, second from left, a ride ."
} 

\begin{tabularx}{\textwidth}{@{} l X @{}}
	\toprule
	Autoregressive & President Barack Obama uses Florida Everglades visit to emphasize dangers of climate change. Air Force One uses 9,000 gallons of fuel, round trip, for the trip. White House says Obama's trip will not undermine his Earth Day message.

 \\
	\midrule
SD &     President Barack Obama held up the shrinking Florida Everglades as proof of climate change. Obama said climate change is a problem now, not a problem for another generation. The president said the rising sea level in South Florida and inward flow of salt water is evidence that climate change is already having a negative effect on communities. Obama said the U.S. will lead the world in combatting climate change through the reduction of carbon emissions. Obama's trip to Florida was criticized for the amount of fuel it took to power Air Force One. Obama was accompanied by Bill Nye 'the Science Guy' on the day trip.

 \\
	\midrule
	SD (Lossy) & President Obama visits Everglades National Park today to highlight the threat of climate change. He says the damage is real and immediate and can no longer be denied. Obama says the U.S. will lead the world in reducing carbon emissions. The president's trip to the Everglades is being criticized for the fuel it uses, 9,000 gallons, round trip. Obama says the damage from climate change is not a problem for another generation, but a problem now. The president says the U.S. can solve the problem of climate change and create jobs at the same time.

 \\
	\midrule
SpecCascade & President Obama visits Everglades National Park today to highlight the threat of climate change. Obama says climate change is already having a negative effect on communities in South Florida and that there's not a problem for another generation, 'not anymore.' The president calls the Everglades'magical' and warns that the threat of climate change is not just about the environment, but also about the economies of surrounding towns that rely heavily on tourist dollars. Obama also says he wants to lead the world in combating this threat and that he's committed the United States to reducing carbon emissions.

 \\
	\midrule
	Static Ensemble & President Barack Obama held up the shrinking Florida Everglades as proof positive that climate change is real and is threatening America's 'national treasures'. Obama said climate change can no longer be denied and action can no longer be delayed. The president said the fallout from rising temperatures is 'not a problem for another generation.' At least, 'not anymore,' he said. 'This is a problem now. It has serious implications for the way we live right now.' Obama said the rising sea level in South Florida and inward flow of salt water is evidence that climate change is already having a negative effect on communities.

     \\
	\midrule
	\method &   

       President Barack Obama holds up the shrinking Florida Everglades as proof that climate change is real and is threatening America's 'national treasures.' Obama says climate change can no longer be denied and that action can no longer be delayed. The president visited the subtropical swamps of the park today as part of a push to get Americans thinking and talking about the damage climate change is causing close to home. Obama's trip to Florida was accompanied by Bill Nye 'the Science Guy' and was part of a push to get Americans thinking and talking about the damage climate change is causing close to home.

\\
	\midrule
	\midrule

\end{tabularx}

\newpage

\newpage

\SampleBoxText{Case Study on the CNNDM dataset using Qwen model.}{\small News:     President Barack Obama held up the shrinking Florida Everglades today as proof positive that climate change is real and is threatening America's 'national treasures,' as well as the economies of their surrounding towns, which rely heavily on tourist dollars. 'Climate change can no longer be denied. It can't be edited out,' he said this afternoon in a speech at Everglades National Park commemorating Earth Day. 'It can't be omitted from the conversation. And action can no longer be delayed.' 'That's why I've committed the United States to lead the world in combatting this threat,' he said, throwing a spotlight on his administration's efforts to reduce the rate of global warming through the reduction of carbon emissions. The day trip to South Florida, on which Obama was accompanied by Bill Nye 'the Science Guy,' also highlighted the massive amount of fuel it takes to power Air Force One, however, 9,000 gallons, round trip. Scroll down for video. President Barack Obama takes a walking tour of the Anhinga Trail at Everglades National Park. Obama visited the subtropical swamps of the park today as part of a push to get Americans thinking and talking about the damage climate change is causing close to home . Obama held up the shrinking Florida Everglades today as proof positive that climate change is real and is threatening America's 'national treasures,' as well as the economies of their surrounding towns, which rely heavily on tourist dollars . 'Climate change can no longer be denied. It can't be edited out,' he said this afternoon in a speech at Everglades National Park commemorating Earth Day. 'It can't be omitted from the conversation. And action can no longer be delayed' Vehicles in the president's motorcade are seen in the Everglades National Park where Obama stopped for a visit on the Anhinga trail in Homestead. Asked earlier this week if Obama's gas-guzzling trip to Florida would undermine his Earth Day message, the White House said it would not and reaffirmed Obama's efforts to reduce pollutants . On of the helicopters escorting Marine One with President Barack Obama onboard, prepares to land at Miami's International Airport . I love the smell of jet fuel, Nye told reporters as he waited to board the president's 747 this morning for the flight to Florida, during which the White House said he would film a video with Obama. Asked earlier this week if Obama's gas-guzzling trip to Florida would undermine his Earth Day message, the White House said it would not and reaffirmed Obama's efforts to reduce pollutants. Obama took a half -hour walking tour of the the Anhinga Trail at the 1.5-million-acre national park today before making a plea for Americans to save the Everglades and other natural gems by making environmentally-conscious decisions. 'I can't think of a better way to spend Earth Day than in one of our nation's greatest natural treasures, the Everglades,' he told his audience, calling the swamp, which he acknowledged is not technically a swamp, 'magical.' 'Climate change is threatening this treasure and the communities that depend on it,' Obama warned. 'If we don't act,there may not be an Everglades as we know it.' The president said the fallout from rising temperatures is 'not a problem for another generation.' At least, 'not anymore,' he said. 'This is a problem now. It has serious implications for the way we live right now.' Obama said the rising sea level in South Florida and inward flow of salt water is evidence that climate change is already having a negative effect on communities. 'This is not some impossible problem that we cannot solve. We can solve it if we've got some political will,' he said. 'And we can solve it in a way that creates jobs. We can solve it in a way that doesn't disrupt our economy but enhances our economy. And it's a bipartisan issue.' 'That's why I've committed the United States to lead the world in combatting this threat,' he said, throwing a spotlight on his administration's efforts to reduce the rate of global warming through the reduction of carbon emissions . An alligator is spotted sunning itself along the Anhinga Trail at Everglades National Park as Obama took a tour . Wiping sweat from his brow as he braved the subtropical temperatures to make his mark on Earth Day, the president mingled with crowd-goers at the mid-day event before returning to Washington on Air Force One. Previewing the day trip, the White House had reiterated  Obama's belief that 'no challenge poses a greater threat to future generations than climate change.' It's statement pointed out that last year was the  warmest year on record and that 14 of the 15 of earth's hottest years happened this century. It said, 'Climate change is also affecting some of the most iconic places in our country, from disappearing glaciers in Glacier National Park to dying Joshua Trees in Joshua Tree National Park.' 'Recognized worldwide as a unique and treasured landscape, the Everglades is a perfect example of the threat we face from climate change, including rising sea levels that result in shoreline erosion and increased flooding.' Obama said it is this generation's 'turn to ensure that' natural monuments remain the 'birthright of all Americans for generations to come.' Wiping sweat from his brow as he braved the subtropical temperatures to make his mark on Earth Day, the president mingled with crowd-goers at the mid-day event before returning to Washington on Air Force One . The White House also asserted that climate change was already displacing animals and plants, including tropical orchids, some of which are only found in south Florida.' 'We are blessed with the most beautiful God-given landscape in the world,' the president said today. 'It's an incredible bounty that's been given to us. But we've got to be good stewards for it. We have to take care of it.' Obama said it is this generation's 'turn to ensure that' natural monuments remain the 'birthright of all Americans for generations to come.' 'So many people here are active in your communities, doing what's needed. The young people who are here, the next generation, they're way ahead of us in understanding how important this is . 'Let's make sure we don't disappoint them. Let's stand up and do what's right before it's too late,' he concluded. Obama was accompanied by Bill Nye 'the Science Guy' on the day trip. He also gave Florida Rep. Debbie Wasserman Schultz, who he's seen here hugging, and Interior Secretary Sally Jewell, second from left, a ride ."
} 

\begin{tabularx}{\textwidth}{@{} l X @{}}
	\toprule
	Autoregressive &  1. President Obama highlights shrinking Florida Everglades as proof that climate change is real and a threat to national treasures and local economies . 
    
    2. Obama's trip to Florida on Air Force One used 9,000 gallons of fuel, which he acknowledges, but the White House defends it as part of the effort to reduce emissions . 
    
    3. Obama urges Americans to take environmentally-conscious actions to save the Everglades and other natural gems.

 \\
	\midrule
SD &  1. President Obama highlights Florida Everglades as proof of climate change impact. 

2. Obama criticizes climate change as a current issue, not a future threat. 

3. White House defends gas-guzzling trip to Florida for Earth Day, linking it to efforts to reduce pollutants.

 \\
	\midrule
	SD (Lossy) & 1. President Obama highlights shrinking Florida Everglades as proof climate change is real and threatening national treasures and local economies. 
    
    2. Obama's trip to Florida, which required 9,000 gallons of fuel for Air Force One, drew criticism over its environmental impact. 
    
    3. Obama emphasizes that climate change is a current issue with serious implications, not a problem for future generations. 
    
    4. Obama's administration is committed to leading the world in combating climate change through reducing carbon emissions. 
    
    5. Obama's Earth Day visit to the Everglades was meant to inspire environmentally-conscious decisions to protect natural treasures.

 \\
	\midrule
SpecCascade &   1. President Obama highlights climate change as a threat to the Everglades and national treasures. 

2. The president says climate change is not a problem for another generation. 

3. Obama's trip to Florida was criticized for being gas-guzzling, but the White House said it would not undermine his Earth Day message. 

4. The White House said that climate change is affecting iconic places like the Everglades and Glacier National Park. 

5. Obama emphasizes the need for action to protect natural monuments as a birthright for future generations.

 \\
	\midrule
	Static Ensemble &   1) President Barack Obama visits Florida Everglades as part of Earth Day message. 
    
    2) Obama and Bill Nye 'the Science Guy' on trip. 
    
    3) Obama addresses climate change as a threat to national treasures and economies. 
    
    4) Obama mentions the impact of climate change on the Everglades and surrounding towns. 
    
    5) Obama emphasizes the need to take action on climate change.

     \\
	\midrule
	\method &   

        1. President Obama highlights Florida Everglades as proof of climate change's real and threatening impact. 
        
        2. Obama commits U.S. to lead in combatting climate change and reducing carbon emissions. 
        
        3. Obama's trip to Florida, accompanied by Bill Nye 'the Science Guy,' was seen as a challenge to his Earth Day message.

\\
	\midrule
	\midrule

\end{tabularx}

\newpage

\newpage

\SampleBoxText{Case Study on the CNNDM dataset using Gemma model.}{\small News: President Barack Obama held up the shrinking Florida Everglades today as proof positive that climate change is real and is threatening America's 'national treasures,' as well as the economies of their surrounding towns, which rely heavily on tourist dollars. 'Climate change can no longer be denied. It can't be edited out,' he said this afternoon in a speech at Everglades National Park commemorating Earth Day. 'It can't be omitted from the conversation. And action can no longer be delayed.' 'That's why I've committed the United States to lead the world in combatting this threat,' he said, throwing a spotlight on his administration's efforts to reduce the rate of global warming through the reduction of carbon emissions. The day trip to South Florida, on which Obama was accompanied by Bill Nye 'the Science Guy,' also highlighted the massive amount of fuel it takes to power Air Force One, however, 9,000 gallons, round trip. Scroll down for video. President Barack Obama takes a walking tour of the Anhinga Trail at Everglades National Park. Obama visited the subtropical swamps of the park today as part of a push to get Americans thinking and talking about the damage climate change is causing close to home . Obama held up the shrinking Florida Everglades today as proof positive that climate change is real and is threatening America's 'national treasures,' as well as the economies of their surrounding towns, which rely heavily on tourist dollars . 'Climate change can no longer be denied. It can't be edited out,' he said this afternoon in a speech at Everglades National Park commemorating Earth Day. 'It can't be omitted from the conversation. And action can no longer be delayed' Vehicles in the president's motorcade are seen in the Everglades National Park where Obama stopped for a visit on the Anhinga trail in Homestead. Asked earlier this week if Obama's gas-guzzling trip to Florida would undermine his Earth Day message, the White House said it would not and reaffirmed Obama's efforts to reduce pollutants . On of the helicopters escorting Marine One with President Barack Obama onboard, prepares to land at Miami's International Airport . I love the smell of jet fuel, Nye told reporters as he waited to board the president's 747 this morning for the flight to Florida, during which the White House said he would film a video with Obama. Asked earlier this week if Obama's gas-guzzling trip to Florida would undermine his Earth Day message, the White House said it would not and reaffirmed Obama's efforts to reduce pollutants. Obama took a half -hour walking tour of the the Anhinga Trail at the 1.5-million-acre national park today before making a plea for Americans to save the Everglades and other natural gems by making environmentally-conscious decisions. 'I can't think of a better way to spend Earth Day than in one of our nation's greatest natural treasures, the Everglades,' he told his audience, calling the swamp, which he acknowledged is not technically a swamp, 'magical.' 'Climate change is threatening this treasure and the communities that depend on it,' Obama warned. 'If we don't act,there may not be an Everglades as we know it.' The president said the fallout from rising temperatures is 'not a problem for another generation.' At least, 'not anymore,' he said. 'This is a problem now. It has serious implications for the way we live right now.' Obama said the rising sea level in South Florida and inward flow of salt water is evidence that climate change is already having a negative effect on communities. 'This is not some impossible problem that we cannot solve. We can solve it if we've got some political will,' he said. 'And we can solve it in a way that creates jobs. We can solve it in a way that doesn't disrupt our economy but enhances our economy. And it's a bipartisan issue.' 'That's why I've committed the United States to lead the world in combatting this threat,' he said, throwing a spotlight on his administration's efforts to reduce the rate of global warming through the reduction of carbon emissions . An alligator is spotted sunning itself along the Anhinga Trail at Everglades National Park as Obama took a tour . Wiping sweat from his brow as he braved the subtropical temperatures to make his mark on Earth Day, the president mingled with crowd-goers at the mid-day event before returning to Washington on Air Force One. Previewing the day trip, the White House had reiterated  Obama's belief that 'no challenge poses a greater threat to future generations than climate change.' It's statement pointed out that last year was the  warmest year on record and that 14 of the 15 of earth's hottest years happened this century. It said, 'Climate change is also affecting some of the most iconic places in our country, from disappearing glaciers in Glacier National Park to dying Joshua Trees in Joshua Tree National Park.' 'Recognized worldwide as a unique and treasured landscape, the Everglades is a perfect example of the threat we face from climate change, including rising sea levels that result in shoreline erosion and increased flooding.' Obama said it is this generation's 'turn to ensure that' natural monuments remain the 'birthright of all Americans for generations to come.' Wiping sweat from his brow as he braved the subtropical temperatures to make his mark on Earth Day, the president mingled with crowd-goers at the mid-day event before returning to Washington on Air Force One . The White House also asserted that climate change was already displacing animals and plants, including tropical orchids, some of which are only found in south Florida.' 'We are blessed with the most beautiful God-given landscape in the world,' the president said today. 'It's an incredible bounty that's been given to us. But we've got to be good stewards for it. We have to take care of it.' Obama said it is this generation's 'turn to ensure that' natural monuments remain the 'birthright of all Americans for generations to come.' 'So many people here are active in your communities, doing what's needed. The young people who are here, the next generation, they're way ahead of us in understanding how important this is . 'Let's make sure we don't disappoint them. Let's stand up and do what's right before it's too late,' he concluded. Obama was accompanied by Bill Nye 'the Science Guy' on the day trip. He also gave Florida Rep. Debbie Wasserman Schultz, who he's seen here hugging, and Interior Secretary Sally Jewell, second from left, a ride ."
} 

\begin{tabularx}{\textwidth}{@{} l X @{}}
	\toprule
	Autoregressive & President Obama held up the shrinking Florida Everglades today as proof positive that climate change is real and is threatening America's 'national treasures'. 
    
    Obama says climate change is already having a negative effect on communities. 
    
    The president warned that if action isn't taken, there may not be an Everglades as we know it. 
    He was accompanied by Bill Nye \"the Science Guy\" on the day trip.

 \\
	\midrule
SD &  9,000 gallons of fuel used for Air Force One round trip . Obama held up the shrinking Florida Everglades as proof positive that climate change is real . Obama said it is this generation's 'turn to ensure that' natural monuments remain the 'birthright of all Americans for generations to come.' .

 \\
	\midrule
	SD (Lossy) &  9,000 gallons of fuel used for Air Force One round trip to Florida . Obama held up the shrinking Florida Everglades as proof positive that climate change is real . President said the rising sea level in South Florida and inward flow of salt water is evidence that climate change is already having a negative effect on communities . Obama said it is this generation's 'turn to ensure that' natural monuments remain the 'birthright of all Americans for generations to come.'.

 \\
	\midrule
SpecCascade &   9,000 gallons of fuel used for Air Force One round trip . Obama held up the shrinking Florida Everglades as proof positive that climate change is real . Obama said it is this generation's 'turn to ensure that' natural monuments remain the 'birthright of all Americans for generations to come.'.

 \\
	\midrule
	Static Ensemble &   1. President Obama held up the shrinking Florida Everglades as proof of climate change.
    
    2. Obama's trip to Florida included a walking tour of the Everglades and a speech on Earth Day. 
    
    3. The trip highlighted the administration's efforts to reduce carbon emissions. 
    
    4. Obama's gas-guzzling trip to Florida was criticized, but the White House defended it. 
    
    5. Obama emphasized the urgency of addressing climate change and its impact on communities."

     \\
	\midrule
	\method &   

        1. Obama uses Everglades as proof of climate change's impact. 
        
        2. Trip highlights shrinking Everglades and economic impact on surrounding towns. 
        
        3. Obama's Air Force One trip uses 9,000 gallons of fuel, raising questions about hypocrisy. 
        
        4. Bill Nye 'the Science Guy' accompanies Obama. 
        
        5. Obama emphasizes the urgency of addressing climate change and its impact on future generations."

\\
	\midrule
	\midrule

\end{tabularx}

\newpage

\newpage

\subsection{XSum Case Study}

\SampleBoxText{Case Study on the XSum dataset using Llama model.}{\footnotesize Gao Yu, 71, was found guilty last April and challenged her conviction at a closed hearing in Beijing on Thursday.The guilty verdict was upheld, but the jail term was cut from seven years to five and the medical grounds permitted.Foreign governments and rights groups have condemned the guilty verdict, calling it politically motivated.Sources have told the BBC that Ms Gao has already left prison.The well-known investigative journalist was found guilty of sending an internal Communist Party document to Mingjing News, a Chinese-language website in the United States. The document warned against the dangers of a free press and an independent civil society.At the time of the original conviction the court had said that Ms Gao had \"illegally provided state secrets to foreigners\".Both Ms Gao and Mingjing News had denied she was the source of the leak.However, on Thursday, Ms Gao confessed to the crime and expressed regret for her actions, and the appeals court reduced her sentence to five years and decided Ms Gao could \"temporarily serve the sentence out of prison\" as she was \"severely ill\", Chinese state media said.Experts say Chinese courts often grant reduced sentences after convicts plead guilty. However, Ms Gao's lawyers told AP they continued to argue that she was innocent.Many believe that Gao Yu is unpopular with the Chinese government because of her unrelenting reports focusing on China's elite-level politics, the BBC's Celia Hatton in Beijing reports.Many people had pressed for a lighter sentence in consideration of Ms Gao's advanced age and her chronic heart problems.Her son, Zhao Meng, had told the BBC he was concerned his mother would not survive another lengthy period in prison, her third since 1989.Prominent human rights activist Hu Jia said (in Chinese): \"I'm relieved Gao Yu can return to her own home, after a year and a half in custody... but the risk of being returned to jail at any time still hangs over Gao Yu's head, like the sword of Damocles.
} 

\begin{tabularx}{\textwidth}{@{} l X @{}}
	\toprule
	Autoregressive &   Chinese journalist Gao Yu has had her seven-year prison sentence cut to five years and allowed to serve at home. 
    Her sentence was reduced after she confessed to the crime and expressed regret.

 \\
	\midrule
SD &   A Chinese journalist's conviction for leaking state secrets is upheld, but her jail term is reduced to five years.  She is released from prison due to health reasons.  She confessed to the crime.  She is still at risk of being returned to jail.  She is a prominent investigative journalist.  She was found guilty of sending an internal Communist Party document to a Chinese-language website in the United States.

 \\
	\midrule
	SD (Lossy) &  A Chinese journalist's conviction for leaking state secrets is upheld, but her jail term is reduced to five years.  She is released from prison due to health reasons.  She confessed to the crime.  She is still at risk of being returned to jail.  She is a prominent investigative journalist.  She was found guilty of sending an internal Communist Party document to a Chinese-language website in the United States.

 \\
	\midrule
SpecCascade &  A Chinese journalist's conviction for leaking state secrets is upheld, but her jail term is reduced to five years.  She is released from prison due to health reasons.  She confessed to the crime.  She is still at risk of being returned to jail.  She is a prominent investigative journalist.  She was found guilty of sending an internal Communist Party document to a Chinese-language website in the United States.  She was found guilty of illegally providing state secrets to foreigners.

 \\
	\midrule
	Static Ensemble &  71-year-old Chinese investigative journalist Gao Yu's conviction is upheld, but her sentence is reduced from 7 years to 5 years. Gao Yu, a 71-year-old Chinese journalist, is released from prison after her conviction is upheld. Gao Yu's conviction for leaking state secrets is upheld but her sentence is reduced to 5 years. Chinese authorities reduce Gao Yu's 7-year sentence to 5 years, allowing her to leave prison early due to her age and health. Gao Yu's sentence is reduced to 5 years, allowing her to return home after serving a year and a half in custody.

     \\
	\midrule
	\method &   71-year-old Chinese journalist Gao Yu has been released from prison after her conviction was upheld, but her sentence was reduced to five years. She was found guilty of leaking state secrets.  Gao Yu has been released from prison after her conviction was upheld, but her sentence was reduced to five years.  She was found guilty of leaking state secrets.  Gao Yu has been released from prison after her conviction was upheld, but her sentence was reduced to five years.

\\
	\midrule
	\midrule

\end{tabularx}

\newpage

\newpage

\SampleBoxText{Case Study on the XSum dataset using Gemma model.}{Gao Yu, 71, was found guilty last April and challenged her conviction at a closed hearing in Beijing on Thursday.The guilty verdict was upheld, but the jail term was cut from seven years to five and the medical grounds permitted.Foreign governments and rights groups have condemned the guilty verdict, calling it politically motivated.Sources have told the BBC that Ms Gao has already left prison.The well-known investigative journalist was found guilty of sending an internal Communist Party document to Mingjing News, a Chinese-language website in the United States. The document warned against the dangers of a free press and an independent civil society.At the time of the original conviction the court had said that Ms Gao had \"illegally provided state secrets to foreigners\".Both Ms Gao and Mingjing News had denied she was the source of the leak.However, on Thursday, Ms Gao confessed to the crime and expressed regret for her actions, and the appeals court reduced her sentence to five years and decided Ms Gao could \"temporarily serve the sentence out of prison\" as she was \"severely ill\", Chinese state media said.Experts say Chinese courts often grant reduced sentences after convicts plead guilty. However, Ms Gao's lawyers told AP they continued to argue that she was innocent.Many believe that Gao Yu is unpopular with the Chinese government because of her unrelenting reports focusing on China's elite-level politics, the BBC's Celia Hatton in Beijing reports.Many people had pressed for a lighter sentence in consideration of Ms Gao's advanced age and her chronic heart problems.Her son, Zhao Meng, had told the BBC he was concerned his mother would not survive another lengthy period in prison, her third since 1989.Prominent human rights activist Hu Jia said (in Chinese): \"I'm relieved Gao Yu can return to her own home, after a year and a half in custody... but the risk of being returned to jail at any time still hangs over Gao Yu's head, like the sword of Damocles.
} 

\begin{tabularx}{\textwidth}{@{} l X @{}}
	\toprule
	Autoregressive & A Chinese journalist, Gao Yu, had her conviction upheld, jail term reduced, and released from prison \textcolor{blue}{on medical grounds.}

 \\
	\midrule
SD &  A Chinese journalist Gao Yu's conviction was upheld, sentence reduced, and she has been released from prison \textcolor{blue}{on medical grounds.}

 \\
	\midrule
	SD (Lossy) &  A Chinese journalist's conviction was upheld, her sentence reduced, and she has been released from prison on \textcolor{blue}{medical grounds.}

 \\
	\midrule
SpecCascade &  A Chinese journalist's conviction was upheld, but her sentence was reduced and she was released due to illness and a guilty plea.

 \\
	\midrule
	Static Ensemble & Chinese journalist Gao Yu's conviction was upheld, sentence reduced, and she has been released from prison \textcolor{green}{due to health concerns.}

     \\
	\midrule
	\method &   Gao Yu, a journalist, was convicted of leaking state secrets, but her sentence was reduced and she has been released from prison.

\\
	\midrule
	\midrule

\end{tabularx}

\newpage

\newpage

\SampleBoxText{Case Study on the XSum dataset using Qwen model.}{\small Gao Yu, 71, was found guilty last April and challenged her conviction at a closed hearing in Beijing on Thursday.The guilty verdict was upheld, but the jail term was cut from seven years to five and the medical grounds permitted.Foreign governments and rights groups have condemned the guilty verdict, calling it politically motivated.Sources have told the BBC that Ms Gao has already left prison.The well-known investigative journalist was found guilty of sending an internal Communist Party document to Mingjing News, a Chinese-language website in the United States. The document warned against the dangers of a free press and an independent civil society.At the time of the original conviction the court had said that Ms Gao had \"illegally provided state secrets to foreigners\".Both Ms Gao and Mingjing News had denied she was the source of the leak.However, on Thursday, Ms Gao confessed to the crime and expressed regret for her actions, and the appeals court reduced her sentence to five years and decided Ms Gao could \"temporarily serve the sentence out of prison\" as she was \"severely ill\", Chinese state media said.Experts say Chinese courts often grant reduced sentences after convicts plead guilty. However, Ms Gao's lawyers told AP they continued to argue that she was innocent.Many believe that Gao Yu is unpopular with the Chinese government because of her unrelenting reports focusing on China's elite-level politics, the BBC's Celia Hatton in Beijing reports.Many people had pressed for a lighter sentence in consideration of Ms Gao's advanced age and her chronic heart problems.Her son, Zhao Meng, had told the BBC he was concerned his mother would not survive another lengthy period in prison, her third since 1989.Prominent human rights activist Hu Jia said (in Chinese): \"I'm relieved Gao Yu can return to her own home, after a year and a half in custody... but the risk of being returned to jail at any time still hangs over Gao Yu's head, like the sword of Damocles.
} 

\begin{tabularx}{\textwidth}{@{} l X @{}}
	\toprule
	Autoregressive &  71-year-old Chinese journalist Gao Yu was sentenced to five years in prison for leaking a Party document but was allowed to serve it outside prison due to illness.

 \\
	\midrule
SD &   \textcolor{blue}{71-year-old Chinese journalist Gao Yu was convicted of leaking state secrets, received a five-year sentence, and is temporarily released due to illness.}

 \\
	\midrule
	SD (Lossy) &   \textcolor{blue}{71-year-old Chinese journalist Gao Yu was convicted of leaking state secrets, received a reduced sentence, and is now out of prison due to illness.}

 \\
	\midrule
SpecCascade &  \textcolor{blue}{71-year-old Chinese journalist Gao Yu was convicted of leaking state secrets, received a reduced sentence, and is now out of prison due to illness.}

 \\
	\midrule
	Static Ensemble &  71-year-old Chinese journalist Gao Yu was found guilty but received a reduced sentence, and the court allowed her to temporarily serve her sentence out of prison due to her serious illness.

     \\
	\midrule
	\method &    71-year-old Chinese journalist Gao Yu was convicted of leaking state secrets, received a five-year sentence, and is now allowed to leave prison temporarily due to illness, despite her lawyers' claims of innocence.

\\
	\midrule
	\midrule

\end{tabularx}

\newpage

\newpage

\subsection{MBPP Case Study}

\SampleBoxText{Case Study on the MBPP dataset using Llama model.}{Write a function to find the median of three numbers.
}

\begin{tabularx}{\textwidth}{@{} l X @{}}
	\toprule

	Autoregressive &   
    \textcolor{green}{\codecell{
def median_numbers(a,b,c):
    numbers = sorted([a,b,c])
    if len(numbers) %
        return (numbers[len(numbers)//2 - 1] + numbers[len(numbers)//2]) / 2
    else:
        return numbers[len(numbers)//2]
}}

\\
	\midrule
SD &   \textcolor{green}{\codecell{
def median_numbers(a,b,c):
    numbers = sorted([a,b,c])
    if len(numbers) %
        return (numbers[len(numbers)//2 - 1] + numbers[len(numbers)//2]) / 2
    else:
        return numbers[len(numbers)//2]
}}
 \\
	\midrule
	SD (Lossy) & \textcolor{blue}{\codecell{
def median_numbers(a,b,c):
    return sorted([a,b,c])[1]
}}

 \\
	\midrule
SpecCascade &   \textcolor{green}{\codecell{
def median_numbers(a,b,c):
    numbers = sorted([a,b,c])
    if len(numbers) %
        median = (numbers[len(numbers)//2 - 1] + numbers[len(numbers)//2]) / 2
    else:
        median = numbers[len(numbers)//2]
    return median
}}

 \\
	\midrule
	Static Ensemble &   \textcolor{blue}{\codecell{
def median_numbers(a,b,c):
    return sorted([a,b,c])[1]
}}
    
     \\
	\midrule
	\method &   

\textcolor{blue}{\codecell{
def median_numbers(a,b,c):
    return sorted([a,b,c])[1]
}}

\\
	\midrule
	\midrule

\end{tabularx}

\newpage

\SampleBoxText{Case Study on the MBPP dataset using Qwen model.}{Write a function to find the median of three numbers.
}

\begin{tabularx}{\textwidth}{@{} l X @{}}
	\toprule

	Autoregressive &   
    \textcolor{blue}{\codecell{
def median_numbers(a,b,c):
    return sorted([a,b,c])[1]
}}

\\
	\midrule
SD &   \textcolor{blue}{\codecell{
def median_numbers(a,b,c):
    return sorted([a,b,c])[1]
}}

 \\
	\midrule
	SD (Lossy) & \textcolor{blue}{\codecell{
def median_numbers(a,b,c):
    return sorted([a,b,c])[1]
}}

 \\
	\midrule
SpecCascade &   \textcolor{blue}{\codecell{
def median_numbers(a,b,c):
    return sorted([a,b,c])[1]
}}

 \\
	\midrule
	Static Ensemble &   \textcolor{blue}{\codecell{
def median_numbers(a,b,c):
    return sorted([a,b,c])[1]
}}
    
     \\
	\midrule
	\method &   

\textcolor{blue}{\codecell{
def median_numbers(a,b,c):
    return sorted([a,b,c])[1]
}
}
\\
	\midrule
	\midrule
\end{tabularx}

\newpage

\SampleBoxText{Case Study on the MBPP dataset using Gemma model.}{Write a function to find the median of three numbers.
}

\begin{tabularx}{\textwidth}{@{} l X @{}}
	\toprule

	Autoregressive &   
    \textcolor{blue}{\codecell{
def median_numbers(a,b,c):
    if (a <= b and b <= c) or (c <= b and b <= a):
        return b
    elif (b <= a and a <= c) or (c <= a and a <= b):
        return a
    else:
        return c
}}
\\
	\midrule
SD &   \codecell{
def median_numbers(a,b,c):
    numbers = sorted([a,b,c])
    if len(numbers) %
        return (numbers[len(numbers)//2 - 1] + numbers[len(numbers)//2]) / 2
    else:
        return numbers[len(numbers)//2]
}
 \\
	\midrule
	SD (Lossy) & \textcolor{blue}{\codecell{
def median_numbers(a,b,c):
    if (a <= b and b <= c) or (c <= b and b <= a):
        return b
    elif (b <= a and a <= c) or (c <= a and a <= b):
        return a
    else:
        return c
}
}

 \\
	\midrule
SpecCascade &  \textcolor{blue}{\codecell{
def median_numbers(a,b,c):
    if (a <= b and b <= c) or (c <= b and b <= a):
        return b
    elif (b <= a and a <= c) or (c <= a and a <= b):
        return a
    else:
        return c
}
}
 \\
	\midrule
	Static Ensemble &   \textcolor{blue}{\codecell{
def median_numbers(a,b,c):
    if (a <= b and b <= c) or (c <= b and b <= a):
        return b
    elif (b <= a and a <= c) or (c <= a and a <= b):
        return a
    else:
        return c
}
}
    
     \\
	\midrule
	\method &   \textcolor{blue}{\codecell{
def median_numbers(a,b,c):
    if (a <= b and b <= c) or (c <= b and b <= a):
        return b
    elif (b <= a and a <= c) or (c <= a and a <= b):
        return a
    else:
        return c
}
}

\\
	\midrule
	\midrule

\end{tabularx}

\end{document}